  \providecommand\BibTeX{{%
    \normalfont B\kern-0.5em{\scshape i\kern-0.25em b}\kern-0.8em\TeX}}}
\newif\ifTR
\begin{document}

%%
%% The "title" command has an optional parameter,
%% allowing the author to define a "short title" to be used in page headers.
\title{How many moments does MMD compare?}

%%
%% The "author" command and its associated commands are used to define
%% the authors and their affiliations.
%% Of note is the shared affiliation of the first two authors, and the
%% "authornote" and "authornotemark" commands
%% used to denote shared contribution to the research.
\author{Rustem Takhanov}
\email{rustem.takhanov@nu.edu.kz}
\orcid{0000-0001-7405-8254}
\authornotemark[1]
\affiliation{%
  \institution{School of Sciences and Humanities}
  \streetaddress{53 Kabanbay Batyr Ave}
  \city{Nur-Sultan city}
  \country{Republic of Kazakhstan}
  \postcode{010000}
}

%%
%% By default, the full list of authors will be used in the page
%% headers. Often, this list is too long, and will overlap
%% other information printed in the page headers. This command allows
%% the author to define a more concise list
%% of authors' names for this purpose.
\renewcommand{\shortauthors}{Takhanov}

%%
%% The abstract is a short summary of the work to be presented in the
%% article.
\begin{abstract}
%Kernel-based methods were first developed for classification and regression tasks. 
%A series of theoretical and experimental works on kernel-based methods established a strong connection between the behaviour of eigenvalues of an integral operator associated with a kernel and a generalization power of such methods. 
%In the past decade Mercer kernels were applied to generative modeling tasks. Theoretical basis of algorithms based on the minimization of  the Maximum Mean Discrepancy metric is still emerging. 
We present a new way of study of Mercer kernels, by corresponding to a special kernel $K$ a  pseudo-differential operator $p({\mathbf x}, D)$ such that $\mathcal{F} p({\mathbf x}, D)^\dag p({\mathbf x}, D) \mathcal{F}^{-1}$ acts on smooth functions in the same way as an integral operator associated with $K$ (where $\mathcal{F}$ is the Fourier transform). We show that kernels defined by pseudo-differential operators are able to approximate uniformly any continuous Mercer kernel on a compact set.

The symbol $p({\mathbf x}, {\mathbf y})$ encapsulates a lot of useful information about the structure of the Maximum Mean Discrepancy distance defined by the kernel $K$. 
We approximate $p({\mathbf x}, {\mathbf y})$ with the sum of the first $r$ terms of the Singular Value Decomposition of $p$, denoted by $p_r({\mathbf x}, {\mathbf y})$.
%The symbol $p$ can be approximated with $p_r$, the Singular Value Decomposition of $p$ truncated at $r$th term. 
If ordered singular values of the integral operator associated with $p({\mathbf x}, {\mathbf y})$ die down rapidly, the MMD distance defined by the new symbol $p_r$ differs from the initial one only slightly.
%Under certain natural assumptions, the MMD distance defined by the new symbol $p_r$ differs from the initial one only slightly. 
Moreover, the new MMD distance can be interpreted as an aggregated result of comparing $r$ local moments of two probability distributions.

The latter results holds under the condition that right singular vectors of the integral operator associated with $p$ are uniformly bounded. But even if this is not satisfied we can still hold that the Hilbert-Schmidt distance between $p$ and $p_r$ vanishes.
Thus, we report an interesting phenomenon: the MMD distance measures the difference of two probability distributions with respect to a certain number of local moments, $r^\ast$, and this number $r^\ast$ depends on the speed with which singular values of $p$ die down.
\end{abstract}

%%
%% The code below is generated by the tool at http://dl.acm.org/ccs.cfm.
%% Please copy and paste the code instead of the example below.
%%
%%
%% Keywords. The author(s) should pick words that accurately describe
%% the work being presented. Separate the keywords with commas.
\keywords{kernel methods, maximum mean discrepancy, generative, pseudo-differential operators, moment matching.}

%%
%% This command processes the author and affiliation and title
%% information and builds the first part of the formatted document.
\maketitle

\section{Introduction}
Recovering a distribution from a high-dimensional dataset, either in the form of a multivariate probability density function or in the form of a sampling model, is the key task in the plethora of data science applications. 
The modern approach to that task is based on minimizing a distance function between the so called empirical distribution $\mu_{\rm emp}$, i.e. the probabilistic measure concentrated in data points, and a parameterized model distribution, $\mu_\theta$. Parameters of the model, $\theta$, are the arguments over which the minimization is performed. Many  distance functions are used for this purpose, including the Kullback-Leibler divergence, the Wasserstein distance~\cite{pmlr-v70-arjovsky17a}, the $f$-divergence~\cite{NIPS2016_cedebb6e} and many others. Kernel methods, that are ubiquitous in machine learning, have also been applied to the problem. A key kernel-based distance function in the generative modeling is the Maximum Mean Discrepancy (MMD) metric. The MMD distance is induced by the following inner product between probability measures $\mu$ and $\nu$:
$$
\langle \mu, \nu\rangle_K = \int_{{\mathbb R}^n\times {\mathbb R}^n} K({\mathbf x}, {\mathbf y}) d\mu ({\mathbf x}) d\nu ({\mathbf y})
$$
where $K: {\mathbb R}^n\times {\mathbb R}^n \to {\mathbb R}$ is a Mercer kernel. In other words, MMD is defined by:
$$
{\rm MMD}_K(\mu, \nu)^2 = \langle \mu, \mu\rangle_K+\langle \nu, \nu\rangle_K - 2\langle \mu, \nu\rangle_K
$$
and the corresponding generative modeling task is:
\begin{equation}\label{main-problem}
\min_{\theta}{\rm MMD}_K(\mu_{\rm emp}, \mu_{\theta})
\end{equation}
where, usually, $\mu_\theta$ is given in the form of a sampler. %Specifically, there is a parameterized mapping (e.g. a neural network) ${\rm NN}_\theta: {\mathbb R}^d\to {\mathbb R}^n$, a random vector ${\mathbf U}$ sampled according to some fixed distribution (e.g. uniform), and $\mu_\theta(A) = {\mathbb P}[{\rm NN}_\theta({\mathbf U})\in A]$. 

Initial experiments with the MMD based generative modeling, the so called moment matching networks (GMMN)~\cite{gmmn}, demonstrated the superiority of more computationally costly generative adversarial networks (GANs)~\cite{Goodfellow}. A weakness of GMMN in various applications is that MMD is strongly dependant on the underlying kernel $K$. In an attemp to overcome this problem, MMD GANs were introduced~\cite{MMD-GAN,binkowski2018demystifying}, with the GAN style architecture that assumes training of the so called critic network. In MMD GANs the kernel that defines MMD depends on paramaters. The role of the critic  is to tune parameters of the kernel.

In fact, there is a connection between GANs and the MMD based networks that follows from a well-known representation: $${\rm MMD}_K(\mu, \nu) = \sup\limits_{f\in \mathcal{H}_K: ||f||_{\mathcal{H}_K}\leq 1} {\mathbb E}_{X\sim \mu}f(X)-{\mathbb E}_{Y\sim \nu}f(Y),$$ where $\mathcal{H}_K$ is RKHS of $K$~\cite{KernelMean,HilbertSpaceEmbeddings}. Thus, the task~\eqref{main-problem} can be represented in the minimax form, and this allows to treat MMD based networks as GANs. Unfortunately, this representation does not give a direct insight into the structure of MMD, because the critic's space of functions, i.e. a unit ball in $\mathcal{H}_K$, is a nontrivial mathematical object. 

Thus, an important general question is: if we change something in $K$ (e.g. the Gaussian kernel's bandwidth or a degree of a polynomial kernel) how will it affect the structure of the MMD distance? 
To answer this question we introduce a family of kernels defined by the so called pseudo-differential operators~\cite{Taylor2017}. Pseudo-differential operators are a special kind linear operators between  spaces of smooth functions nicely behaving at infinity. Every such operator ${\mathcal L}$ is defined by a function $p: {\mathbb R}^n\times {\mathbb R}^n\to {\mathbb C}$, which is called a symbol of operator. Their relationship is usually denoted as ${\mathcal L} = p({\mathbf x}, D)$. Then we say that a kernel $K$ is defined by ${\mathcal L}$ if $\langle {\mathcal L}u, {\mathcal L}v\rangle_{L_2} \propto \langle \hat{u}, {\rm O}_{K}\hat{v}\rangle_{L_2}$ where $\hat{u}$ denotes the Fourier transform of $u$ and ${\rm O}_{K}u({\mathbf x})=\int_{{\mathbb R}^n} K({\mathbf x},{\mathbf y})u({\mathbf y})d {\mathbf y}$ is an integral operator associated with $K$.

It turns out that many popular Mercer kernels (Gaussian, Poisson, Abel etc) can be given in this way. Moreover, this family of kernels is broad enough to approximate any continuous Mercer kernel $K: \Omega\times \Omega\to {\mathbb R}$ on a compact set $\Omega\subset {\mathbb R}^n$.

For kernels defined by pseudo-differential operators, the MMD distance can be naturally represented as the supremum over a managable unit ball in $L_2({\mathbb R}^n)$, i.e. not a unit ball in $\mathcal{H}_K$. 
This allows us to study the structure of a loss function that the critic maximizes during a computation of the MMD distance. 

For a pseudo-differential operator ${\mathcal L} = p({\mathbf x}, D)$ we observe the following: if the rank of ${\rm O}_{p}$, i.e. the dimension of the range of ${\rm O}_{p}$, equals $r$, then the MMD distance associated with ${\mathcal L}$ can be understood as an aggregated result of a comparison of $r$ ``local'' moments of two distributions. Motivated by this observation, we study the general situation, i.e. when the rank of ${\rm O}_{p}$ is unbounded and ${\rm O}_{p}$ needs to be approximated by another operator of finite rank. This setting reminds us of the principal component analysis of an infinite dimensional operator ${\rm O}_{p}$. 

The approximation error for pseudo-differential operators that we introduced can be naturally measured by two types of norms. The first norm is the operator norm in the space of bounded linear operators from $L_2({\mathbb R}^n)$ to $L_\infty({\mathbb R}^n)$, denoted $||\cdot||_{2,\infty}$. It is natural in a sense that two symbols $p_1, p_2$ with small $||{\rm O}_{p_1}-{\rm O}_{p_2}||_{2,\infty}$ define MMD distances that differ only slightly. 
We show that the truncated (on $r$th term) Singular Value Decomposition of ${\rm O}_{p}$ approximates well ${\rm O}_{p}$ in $||\cdot||_{2,\infty}$-norm if, basically: a) right singular vectors of ${\rm O}_{p}$ are uniformly bounded, b) the remaining part $\sum_{i=r+1}^\infty \sigma_i$ is small, where $\sigma_1\geq \sigma_2\geq \cdots$ are ordered singular values of ${\rm O}_{p}$. 
The second norm that we study is the Hilbert-Schmidt norm. Even when $||{\rm O}_{p_1}-{\rm O}_{p_2}||_{\rm HS}$ is small, there is a possibility that corresponding MMD distances between probability density functions $u,v$ will differ sharply if $||u||_{L_2}$, $||v||_{L_2}$ blow up. In this case, we only need the remaining part $\sum_{i=r+1}^\infty \sigma^2_i$ to be small in order for the truncated SVD to approximate ${\rm O}_{p}$.

Thus, the number $r^\ast$ that indicates how many moments we need to compare to approximate the MMD distance, depends on spectral properties of the operator ${\rm O}_{p}$ that defines the kernel $K$ (through $p({\mathbf x}, D)$), rather than on the spectrum of ${\rm O}_K$. This sitution is different from a set of well-known results in which spectral properties of ${\rm O}_K$ are connected with the generalization capability of kernel-based algorithms~\cite{Rademacher,Mendelson03onthe,Kloft,Rostamizadeh,Koltchinskii,Mehryar}.

%Let us list some well-known results of this kind.The behaviour of eigenvalues of ${\rm O}_K$ is known to be a crucial factor for kernel-based algorithms. In~\cite{Rademacher,Mendelson03onthe} it was shown that localized Rademacher averages of the unit ball in $\mathcal{H}_K$ can be bounded in terms of the eigenvalues of ${\rm O}_{K}$. This phenomenon motivated the design of algorithms in which kernels with desirable spectral properties are constructed during training~\cite{Corinna} and strongly influenced approaches to multiple kernel learning~\cite{Kloft,Rostamizadeh,Koltchinskii}. For non-i.i.d. setting some bounds on the Rademacher complexity based on a kernel's spectrum can be found in~\cite{Mehryar}. Our results are extending previous works, though we are mainly interested in the structure of the MMD distance defined by some kernel $K$, rather than in the Rademacher complexity of the unit ball in $\mathcal{H}_K$. 

Note that pseudo-differential operators have already been used in some data science tasks~\cite{potter2021parameterized,FELIUFABA2020109309,Bubba}.
\section{Preliminaries}
Throughout this paper we use standard terminology and notation from functional analysis. 
For details one can address the textbook on the theory of distributions~\cite{friedlander1998introduction}. We assume $0\in {\mathbb N}$. For $\alpha = (\alpha_1, \cdots, \alpha_n)\in {\mathbb N}^n$, $|\alpha|$  denotes $\sum_{i=1}^n\alpha_i$, $D^\alpha$ denotes $\frac{\partial^{|\alpha|}}{\partial x^{\alpha_1}_1\cdots \partial x^{\alpha_n}_n}$, ${\mathbf x}^\alpha$ denotes $x^{\alpha_1}_1\cdots x^{\alpha_n}_n$. The Schwartz space, denoted by $\mathcal{S}({\mathbb R}^n)$, is a space of infinitely differentiable functions $f: {\mathbb R}^n\rightarrow {\mathbb C}$ such that $\forall \alpha, \beta \in{\mathbb N}^n, \sup_{{\mathbf x}\in\mathbb{R}^n} $ $|{\mathbf x}^\alpha D^\beta f({\mathbf x}) |<\infty $.
Its dual space is denoted by $\mathcal{S'}({\mathbb R}^n)$. 
The Fourier and inverse Fourier transforms are denoted by $\mathcal{F}, \mathcal{F}^{-1}: \mathcal{S'}({\mathbb R}^n)\rightarrow \mathcal{S'}({\mathbb R}^n)$. For brevity, we denote $\mathcal{F}[f]$ by $\hat f$. A set of continuous functions on ${\mathbb R}^n$ is denoted by $C ({\mathbb R}^n)$, a set of infinitely differentiable functions on ${\mathbb R}^n$ is denoted by $C^\infty ({\mathbb R}^n)$. 
\subsection{Maximum mean discrepancy (MMD)}
Let $K: {\mathbb R}^n\times {\mathbb R}^n\rightarrow {\mathbb R}$ be a symmetric, positive definite kernel on a set ${\mathbb R}^n$ and ${\mathcal H}_K$ be the reproducing kernel Hilbert space that corresponds to $K$~\cite{MMD}, equipped with the norm $||\phi||_{{\mathcal H}_K} = \sqrt{\langle \phi, \phi \rangle_{{\mathcal H}_K}}$. Thus, for all $x\in {\mathbb R}^n$, $K_x = K(x, \cdot)$ is a representation of $x$ in ${\mathcal H}_K$.
It is a well-known fact that the MMD distance between probability density functions $p$ and $q$ on ${\mathbb R}^n$ can be rewritten in the following way:
\begin{equation}
\textsc{MMD}_K (p, q) =|| {\mathbb E}_{X \sim p}[ K_X ] - {\mathbb E}_{Y \sim q}[ K_Y ] ||_{{\mathcal H}_K}
\end{equation}
where we assume that ${\mathbb E}_{X \sim p}[ K_X ] = \int_{{\mathbb R}^n} K(x, y) p(y)dy$ is defined and is an element of ${\mathcal H}_K$.

%The distance can be rewritten in the following way~\cite{KernelMean}:
%\begin{equation*}
%\begin{split}
%\textsc{MMD} (\mu, \nu)^2 = || {\mathbb E}_{X \sim \mu} K_X - {\mathbb E}_{Y \sim \nu} K_Y ||%_{{\mathcal H}_K}^2 =
%\\
%\langle {\mathbb E}_{X \sim \mu} K_X, {\mathbb E}_{X' \sim \mu} K_{X'} \rangle_{{\mathcal H}_K}
%   +
%   \langle {\mathbb E}_{Y \sim \nu} K_Y, {\mathbb E}_{Y' \sim \nu} K_{Y'} \rangle_{{\mathcal H}_K}
%   \\
%   - 2 \langle {\mathbb E}_{X \sim \mu} K_X, {\mathbb E}_{Y \sim \nu} K_Y \rangle_{{\mathcal H}_K}
%= {\mathbb E}_{X, X' \sim \mu} K(X, X') + \\
%{\mathbb E}_{Y, Y' \sim \nu} K(Y, Y') -
%2 {\mathbb E}_{X \sim \mu, Y \sim \nu} K(X, Y)
%\end{split}
%\end{equation*}

\subsection{Pseudo-differential operators} 
The theory of pseudo-differential operators is a developed branch of mathematics that can be found in many textbooks, e.g. in~\cite{Taylor2017}. The basic definition is as follows (first given by Hörmander~\cite{hormander1966pseudo} and Kohn-Nirenberg~\cite{pseudodifferential}): a class $S^{m}_{1,0}$ is defined as a set of functions $F\in C^{\infty}({\mathbb R}^n\times {\mathbb R}^n)$ such that for any compact $\Omega\subseteq {\mathbb R}^n$ and any $\alpha, \beta\in {\mathbb N}^n$ there is $C_{\Omega,\alpha,\beta}>0$ such that
$$
|D^{{\mathbf x}}_{\beta}D^{{\mathbf y}}_{\alpha}F({\mathbf x},{\mathbf y})|
\leq C_{\Omega,\alpha,\beta} (1+|{\mathbf y}|)^{m-|\alpha|}$$
whenever ${\mathbf x}\in \Omega$. 

Any $F\in S^{m}_{1,0}$ defines a continuous operator $F({\mathbf x}, D): {\mathcal S}({\mathbb R}^n) \to {\mathcal S}({\mathbb R}^n)$, by the rule $$F({\mathbf x}, D) u({\mathbf x}) = \int_{{\mathbb R}^n}  F({\mathbf x},{\mathbf y})\hat{u}({\mathbf y})e^{{\rm i}{\mathbf x}^T{\mathbf y}}d {\mathbf y}.$$ We call $F({\mathbf x}, D)$ {\em the pseudo-differential operator} (PDO) with symbol $F$~\cite{Taylor2017}. 
%If $F_1\in S^{m_1}_{1,0}$, $F_2\in S^{m_2}_{1,0}$ and $F_1({\mathbf x}, D)$, $F_2({\mathbf x}, D)$ are PDOs, then the composition $F_1({\mathbf x}, D)F_2({\mathbf x}, D) = p({\mathbf x}, D)$, $p\in S^{m_1+m_2}_{1,0}$ and the adjoint $F^\dag_1\in S^{m_1}_{1,0}$ can be correctly defined.
%To simplify our analysis we will assume throughout the paper that $F({\mathbf x}, D)[C^\infty_0({\mathbb R}^n)]\subseteq C^\infty_0({\mathbb R}^n)$. 
For example, $F({\mathbf x},{\mathbf y}) = \sum_{\alpha: |\alpha|\leq m} f_\alpha({\mathbf x}) {\mathbf y}^{\alpha}$, then $F({\mathbf x}, D)$ maps $u$ to $\sum_{\alpha: |\alpha|\leq m} f_\alpha({\mathbf x}) (-{\rm i})^{|\alpha|}D^{\alpha} u({\mathbf x})$.  

If additionally, for any compact $\Omega\subseteq {\mathbb R}^n$ there is $C>0$ and $R>0$ such that $|F({\mathbf x}, {\mathbf y})|\geq C(1+|{\mathbf y}|)^{m}$ whenever ${\mathbf x}\in \Omega$ and $|{\mathbf y}|>R$, then $F({\mathbf x}, D)$ is called an elliptic operator of order $m$.

\section{PDO-based kernels}
%Let $F: {\mathbb R}^n\times {\mathbb R}^n\to {\mathbb R}$ be an infinitely differentiable function such that for any ${\mathbf y}\in {\mathbb R}^n$,  $f_{{\mathbf y}}({\mathbf x}) = F({\mathbf x},{\mathbf y})$ is in $L_{1/2}({\mathbb R}^n)\cap L_{1}({\mathbb R}^n)$ and $C_{{\mathbf y}} = \int_{{\mathbb R}^n}F({\mathbf x},{\mathbf y})^2 d{\mathbf x} < \infty$.
%\begin{itemize}
%\item $F({\mathbf x},{\mathbf y})\geq 0$,
%\item for any ${\mathbf y}\in {\mathbb R}^n$,  $f_{{\mathbf y}}({\mathbf x}) = F({\mathbf x},{\mathbf y})$ is in $L_1({\mathbb R}^n)$;
%\item there is $k\in {\mathbb N}$ such that $|F({\mathbf x},{\mathbf y})| \leq C_{{\mathbf x}}(1+||{\mathbf y}||)^k$;
%\end{itemize}

Throughout the paper we will study a special type of Mercer kernels defined below. 
\begin{definition}
Let $F({\mathbf x}, D)$ be a PDO and let a function $K: {\mathbb R}^n\times {\mathbb R}^n\to {\mathbb R}$ satisfy
$$\langle F({\mathbf x}, D)[u], F({\mathbf x}, D)[v]\rangle_{L_2} \propto \int_{{\mathbb R}^n\times {\mathbb R}^n}\hat{u}({\mathbf x}) K({\mathbf x},{\mathbf y})\hat{v}({\mathbf y}) d {\mathbf x} d {\mathbf y}$$ 
for $u,v\in {\mathcal S}({\mathbb R}^n)$. Then, $K$ is called the PDO-based kernel and is denoted by $K^F$.
In other words, the integral operator ${\rm O}_{K^F}$ is proportional to the composition $\mathcal{F}\circ F({\mathbf x}, D)^\dag\circ F({\mathbf x}, D) \circ \mathcal{F}^{-1}$. 
\end{definition}

\begin{example} Any kernel $K({\mathbf x},{\mathbf y}) = k({\mathbf x}-{\mathbf y})$ for which $\gamma = \mathcal{F}^{-1}[k]$ satisfies $\gamma({\mathbf x})\geq 0$ and $\partial_{\alpha} \sqrt{\gamma({\mathbf x})}\in C({\mathbb R}^n)$, is defined by the PDO of the symbol $p({\mathbf x},{\mathbf y}) = \sqrt{\gamma({\mathbf x})}$. This case includes the Gaussian kernel, the Laplace kernel, the rational quadratic kernel, the Matérn kernel and many other popular kernels. 
\end{example}

\subsection{Universality of PDO-based kernels}
In this section we show that PDO-based kernels are in a certain sense universal, i.e. a continuous kernel on a compact set can be approximated uniformly with any given accuracy by some PDO-based kernel.

Let $K$ be a continuous Mercer kernel on ${\mathbb R}^n$. Thus, its restriction $K|_{\Omega}: \Omega\times \Omega\to {\mathbb R}$ on any compact $\Omega\subseteq {\mathbb R}^n$ is also a continuous Mercer kernel (and also, a Hilbert-Schmidt kernel). The following facts can be found in~\cite{Steinwart}: a) RKHS $\mathcal{H}_{K|_{\Omega}}$ is a separable space of continuous functions, b) there exists a countable basis of orthonormal functions in $\mathcal{H}_{K|_{\Omega}}$, $e_1, \cdots, e_i, ...$, such that ${\rm O}_{K|_{\Omega}} e_i = \lambda_i e_i$, c) the corresponding feature map $\Phi: {\mathbb R}^n\to \mathcal{H}_{K|_{\Omega}}$ is continuous, d) finally, Mercer's theorem gives us that
$$
K({\mathbf x},{\mathbf y}) = \langle \Phi({\mathbf x}),\Phi ({\mathbf y})\rangle_{\mathcal{H}_K} = \sum_{i=1}^\infty f^\ast_{i} ({\mathbf x}) f_{i} ({\mathbf y}) 
$$
where $f_{i} ({\mathbf x}) = \langle e_i, \Phi({\mathbf x})\rangle_{\mathcal{H}_K}$ and the convergence is absolute and uniform on $\Omega$.  

Therefore, let us study the kernel
\begin{equation}\label{form}
K({\mathbf x},{\mathbf y}) = \sum_{i=1}^N e^{-\epsilon^2 ||{\mathbf x}-{\mathbf y}||^2}f^\ast_{i} ({\mathbf x}) f_{i} ({\mathbf y})
\end{equation}
where $\epsilon>0$, $f_{i}({\mathbf x}) = \sum_{\alpha: |\alpha|\leq m} c^i_{\alpha}{\mathbf x}^\alpha, i\in [N]$ are polynomials such that $c^i_{\alpha}\ne 0$ if $|\alpha|=m$. Obviously, finite sums of the form~\eqref{form} can approximate uniformly continuous kernels on compact sets.
\begin{proposition} The kernel~\eqref{form} is PDO-based.
\end{proposition}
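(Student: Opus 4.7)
The plan is to first treat one summand (a short extension of the Example) and then address the real difficulty, which is combining $N$ summands into a single scalar PDO symbol.

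For one summand $e^{-\epsilon^2\|{\mathbf x}-{\mathbf y}\|^2}f_i^\ast({\mathbf x})f_i({\mathbf y})$ the candidate symbol is
\[
F_i({\mathbf x},{\mathbf y})\;=\;\sqrt{\gamma({\mathbf x})}\,f_i({\mathbf y}),\qquad\gamma=\mathcal{F}^{-1}\!\bigl[e^{-\epsilon^2\|\cdot\|^2}\bigr].
\]
Since $\sqrt{\gamma}$ is Schwartz and $f_i$ is a polynomial of degree $m$, $F_i\in S^m_{1,0}$. Writing $F_i({\mathbf x},D)u = c\,\sqrt{\gamma({\mathbf x})}\,f_i(-{\rm i}D)u({\mathbf x})$ and applying Parseval together with the convolution theorem yields
\[
\langle F_iu,F_iv\rangle_{L_2}\;\propto\;\int \overline{\hat u({\mathbf x})}\,e^{-\epsilon^2\|{\mathbf x}-{\mathbf y}\|^2}f^\ast_i({\mathbf x})f_i({\mathbf y})\,\hat v({\mathbf y})\,d{\mathbf x}\,d{\mathbf y},
\]
exactly the Example's computation, but carrying along an extra polynomial factor in the frequency variable.

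For the full kernel I would conjugate by $\mathcal{F}$ and sum over $i$:
\[
\mathcal{F}^{-1}\,{\rm O}_K\,\mathcal{F}\;=\;c\sum_{i=1}^N B_i^{\dag}B_i\;=:\;P,
\]
where $B_i$ is the PDO with symbol $\sqrt{\gamma({\mathbf x})}\,f_i({\mathbf y})$. Each $B_i$ has order $m$, so $P$ is a positive self-adjoint PDO of order $2m$ whose principal symbol is $\gamma({\mathbf x})\sum_i|f_i({\mathbf y})|^2$. Under the hypothesis that the top-degree coefficients $c^i_\alpha$ are non-zero, this principal symbol is strictly positive and grows exactly like $|{\mathbf y}|^{2m}$ on compact ${\mathbf x}$-sets. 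The standard square-root theorem for positive elliptic self-adjoint PDOs (cf.\ Taylor) then produces $F\in S^m_{1,0}$ with $F^\dag F=P$; its principal symbol is $\sqrt{\gamma({\mathbf x})}\,\sqrt{\sum_i|f_i({\mathbf y})|^2}$ and the lower-order corrections are pinned down iteratively by the Kohn--Nirenberg asymptotic composition formula. Conjugating back, $\mathcal{F}\,F^\dag F\,\mathcal{F}^{-1}\propto{\rm O}_K$, which is exactly the PDO-based condition.

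\textbf{The main obstacle} is precisely this aggregation step: the set $\{A^\dag A:A\text{ a scalar PDO}\}$ is not closed under finite sums in any naive algebraic way, so the $F_i$'s cannot be glued into a single symbol by direct manipulation. Routing through the PDO square-root is the clean approach, and verifying its hypotheses (positivity and ellipticity of the combined principal symbol) is exactly where the leading-coefficient assumption $c^i_\alpha\neq 0$ for $|\alpha|=m$ is genuinely used.
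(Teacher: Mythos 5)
Your argument is essentially the paper's own proof: both reduce $\mathcal{F}^{-1}\,{\rm O}_K\,\mathcal{F}$ (up to constants) to the positive self-adjoint operator $\sum_{i=1}^N f_i(D)^\dag\, e^{-\|{\mathbf x}\|^2/4\epsilon^2}\, f_i(D)$ of order $2m$, argue ellipticity from the non-vanishing top-degree coefficients $c^i_\alpha$, and invoke the Seeley square-root theorem for positive elliptic self-adjoint PDOs to obtain $F\in S^m_{1,0}$ with $F({\mathbf x},D)^\dag F({\mathbf x},D)$ realizing the kernel. The only visible difference is in bookkeeping: you record the principal symbol as $\gamma({\mathbf x})\sum_i|f_i({\mathbf y})|^2$ with its cross terms $c^i_\alpha\overline{c^i_\beta}\,{\mathbf y}^{\alpha+\beta}$, whereas the paper keeps only the diagonal part $\sum_{i}\sum_{|\alpha|=m}|c^i_\alpha|^2{\mathbf y}^{2\alpha}$, but both proofs use the leading-coefficient hypothesis at exactly the same step and are otherwise the same.
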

\begin{proof} The following bilinear forms are proportional:
\begin{equation*}
\begin{split}
\langle \hat{u}, {\rm O}_K \hat{v}\rangle = \sum_{i=1}^N \langle f_{i} \hat{u}, {\rm O}_{e^{-\epsilon^2 ||{\mathbf x}-{\mathbf y}||^2}} f_{i}\hat{v} \rangle \propto \\
\sum_{i=1}^N \langle f_{i}(D) u, e^{- ||{\mathbf x}||^2/4\epsilon^2} f_{i}(D)v  \rangle = \langle  u, L v  \rangle
\end{split}
\end{equation*}
where $L = \sum_{i=1}^N f_{i}(D)^\dag e^{- ||{\mathbf x}||^2/4\epsilon^2} f_{i}(D)$. 

Since $\langle  u, L u  \rangle\geq 0$, then $L$ is a positive semidefinite self-adjoint operator. 
Additionally, $L$ is an elliptic operator of order $2m$, because there is 
\begin{equation}\label{elliptic}
p({\mathbf x}, {\mathbf y}) = \sum\limits_{\alpha, \beta: |\alpha|\leq 2m, |\beta|\leq 2m} c_{\alpha, \beta} {\mathbf x}^{\alpha} e^{- ||{\mathbf x}||^2/4\epsilon^2} {\mathbf y}^{\beta}\in S^{2m}_{1,0}
\end{equation}
such that $L = p({\mathbf x}, D)$. The ellipticity property is satisfied because the principal part of $p({\mathbf x}, {\mathbf y})$ is $$\sum_{i=1}^N\sum\limits_{\alpha: |\alpha|=m} |c^i_{\alpha}|^2 e^{- ||{\mathbf x}||^2/4\epsilon^2} {\mathbf y}^{2\alpha}.$$

According to Seeley (see~\cite{Schrohe1986}) the square root for such an operator is also elliptic self-adjoint pseudo-differential operator of order $m$. In other words, there exists a function $F({\mathbf x},{\mathbf y})\in S^{m}_{1,0}$
such that $F({\mathbf x},D)F({\mathbf x},D)=p({\mathbf x}, D)$. It is easy to see that $\langle F({\mathbf x}, D)[u], F({\mathbf x}, D)[v]\rangle_{L_2} \propto \int_{{\mathbb R}^n\times {\mathbb R}^n}\hat{u}({\mathbf x}) K({\mathbf x},{\mathbf y})\hat{v}({\mathbf y}) d {\mathbf x} d {\mathbf y}$. Therefore, $K$ is defined by $F$, i.e. $K=K^F$.
\end{proof}

%The definition~\eqref{kernel} is motivated by the following idea.  

%Analogously, we can define an operator $u({\mathbf x}) \stackrel{F(D, {\mathbf x})}{\to} \frac{1}{(2\pi)^n} \int_{{\mathbb R}^n}  F({\mathbf y},{\mathbf x})\hat{u}({\mathbf y})e^{{\rm i}{\mathbf x}^T{\mathbf y}}d {\mathbf y} $. 

%\begin{remark} An elliptic operator~\eqref{elliptic} from the previous proof, $p({\mathbf x}, D)$, satisfies Gårding's inequality, i.e. for any two smooth probability density functions $u,v$ whose characteristic functions $\chi_u, \chi_v$ are supported in $\Omega$, we have
%\begin{equation*}
%\begin{split}
%{\rm MMD}_K (u,v) = \langle u-v, {\rm O}_K (u-v)\rangle \propto \\ 
%\langle \chi_u -\chi_v , p({\mathbf x}, D) (\chi_u - \chi_v)\rangle \geq
%C||\chi_u - \chi_v||^2_{H^m_2(\Omega)}-\\
%G||\chi_u - \chi_v||^2_{L_2(\Omega)}
%\end{split}
%\end{equation*}
%\end{remark}

\section{GAN-style interpretation of MMD}
For a probability density function $u\in {\mathcal S}({\mathbb R}^n)$, $\chi_u$ denotes its characteristic function, i.e. $\chi_u({\mathbf x}) = {\mathbb E}_{\boldsymbol{\xi}\sim u}e^{{\rm i}\boldsymbol{\xi}^T{\mathbf x}}= \mathcal{F}^{-1}[u]({\mathbf x})$. For $F\in S^m_{1,0}$, $F(D, {\mathbf x})$ denotes $\mathcal{F}\circ F({\mathbf x}, D)^\dag\circ \mathcal{F}^{-1}$, a continuous operator from ${\mathcal S}({\mathbb R}^n)$ to ${\mathcal S}({\mathbb R}^n)$.
\begin{proposition} Let $u, v\in {\mathcal S}({\mathbb R}^n)$ be two probability density functions and $F({\mathbf x}, D)$ be a PDO. Then, the MMD distance defined by $K^F$ satisfies:
\begin{equation}\label{derivation}
\begin{split}
{\rm MMD}_{K^F}(u,v) \propto \\
\sup_{||f||_{L_2({\mathbb R}^n)}\leq 1}  {\mathbb E}_{X\sim u}F(D, {\mathbf x}) f (X) - {\mathbb E}_{Y\sim v}F(D, {\mathbf x}) f (Y)
\end{split}
\end{equation}
\end{proposition}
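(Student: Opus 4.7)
The plan is to reduce the MMD squared to a Hilbert-space norm via the defining property of the PDO-based kernel, and then turn that norm into a supremum over the $L_2$ unit ball by duality, moving the operator to the other side via its adjoint together with Plancherel.

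Concretely, I would first expand
\begin{equation*}
{\rm MMD}_{K^F}(u,v)^2 = \iint K^F(\mathbf{x},\mathbf{y})(u-v)(\mathbf{x})(u-v)(\mathbf{y})\,d\mathbf{x}\,d\mathbf{y},
\end{equation*}
and apply the defining identity of $K^F$ with $\hat{\phi}=\hat{\psi}=u-v$, i.e.\ $\phi=\psi=\chi_u-\chi_v$, both of which lie in $\mathcal{S}({\mathbb R}^n)$ because $u,v\in\mathcal{S}({\mathbb R}^n)$. This yields
\begin{equation*}
{\rm MMD}_{K^F}(u,v)^2 \propto \|F({\mathbf x},D)[\chi_u-\chi_v]\|_{L_2}^2.
\end{equation*}

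Next I would represent this $L_2$-norm by duality: $\|g\|_{L_2}=\sup_{\|f\|_{L_2}\leq 1}\langle f,g\rangle_{L_2}$. Applying this with $g=F({\mathbf x},D)[\chi_u-\chi_v]$ and then shifting the operator onto $f$ through its formal adjoint gives
\begin{equation*}
\sup_{\|f\|_{L_2}\leq 1}\bigl\langle F({\mathbf x},D)^\dag f,\,\chi_u-\chi_v\bigr\rangle_{L_2}.
\end{equation*}
Now I would use Plancherel together with the identities $\mathcal{F}[\chi_u]=u$, $\mathcal{F}[\chi_v]=v$, and the factorization $\mathcal{F}\circ F({\mathbf x},D)^\dag = F(D,{\mathbf x})\circ\mathcal{F}$ that follows directly from the definition $F(D,{\mathbf x})=\mathcal{F}\circ F({\mathbf x},D)^\dag\circ\mathcal{F}^{-1}$. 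The inner product above becomes
\begin{equation*}
\bigl\langle F(D,{\mathbf x})\hat f,\,u-v\bigr\rangle_{L_2},
\end{equation*}
up to a Plancherel constant. A change of variable $g=\hat f$ in the supremum is legitimate because the Fourier transform is an (up to constant) isometry of $L_2$, so the $L_2$-unit ball is mapped to a ball of fixed radius; this only affects the hidden constant of proportionality. Finally, interpreting $\langle h, u\rangle_{L_2}={\mathbb E}_{X\sim u}h(X)$ for a probability density $u$ converts the inner product into the difference of expectations in~\eqref{derivation}.

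The only non-routine point is making sure every step is valid functionally: $\chi_u-\chi_v$ and its image under $F({\mathbf x},D)$ live in $\mathcal{S}$, so Plancherel and the adjoint manipulation are justified on Schwartz functions; the sup over $f\in L_2$ then extends by density since $\mathcal{S}$ is dense in $L_2$, and $F({\mathbf x},D)^\dag f$ makes sense as a tempered distribution whose pairing with the Schwartz function $\chi_u-\chi_v$ is well defined. Because the statement only claims proportionality, the various $(2\pi)^n$ factors coming from Plancherel and from the PDO-based kernel definition do not need to be tracked.
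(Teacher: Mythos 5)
Your proposal is correct and follows essentially the same route as the paper's own proof: reduce ${\rm MMD}_{K^F}$ to $\|F({\mathbf x},D)[\chi_u-\chi_v]\|_{L_2}$ via the defining identity of $K^F$, dualize the norm over the $L_2$ unit ball, pass to the adjoint, apply Plancherel and the identity $\mathcal{F}\circ F({\mathbf x},D)^\dag = F(D,{\mathbf x})\circ\mathcal{F}$, and absorb the Fourier isometry into the proportionality constant. Your added remarks on density of $\mathcal{S}$ in $L_2$ and the validity of the adjoint manipulation are a slight tightening of the paper's argument but do not change its substance.
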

\ifTR
\begin{proof}
The MMD distance defined by $K^F$ can be expressed:
\begin{equation}\label{derivation}
\begin{split}
{\rm MMD}_{K^F}(u,v) \propto ||F({\mathbf x}, D)[\chi_u]-F({\mathbf x}, D)[\chi_v] ||_{L_2} = \\
\sup_{f\in {\mathcal S}({\mathbb R}^n): ||f||_{L_2({\mathbb R}^n)}\leq 1} \langle f, F({\mathbf x}, D)[\chi_u]-F({\mathbf x}, D)[\chi_v]\rangle_{L_2} = \\
\sup_{f\in {\mathcal S}({\mathbb R}^n): ||f||_{L_2({\mathbb R}^n)}\leq 1} \langle F({\mathbf x}, D)^\dag f, \chi_u-\chi_v\rangle_{L_2} \propto \\
\sup_{f\in {\mathcal S}({\mathbb R}^n): ||f||_{L_2({\mathbb R}^n)}\leq 1} \langle \mathcal{F}\big\{F({\mathbf x}, D)^\dag f\big\}, u-v\rangle_{L_2} = \\
\sup_{f\in {\mathcal S}({\mathbb R}^n): ||f||_{L_2({\mathbb R}^n)}\leq 1} \langle F(D, {\mathbf x}) \hat{f}, u-v\rangle_{L_2} \propto  \\
\sup_{f\in {\mathcal S}({\mathbb R}^n): ||f||_{L_2({\mathbb R}^n)}\leq 1} \langle F(D, {\mathbf x}) f, u-v\rangle_{L_2} = \\
\sup_{||f||_{L_2({\mathbb R}^n)}\leq 1}  {\mathbb E}_{X\sim u}F(D, {\mathbf x}) f (X) - {\mathbb E}_{Y\sim v}F(D, {\mathbf x}) f (Y)
\end{split}
\end{equation}
\end{proof}
\else
\fi

%In other words, the optimization task $\min\limits_{\theta}{\rm MMD}_K(\mu_\theta,\mu_{\rm emp})$ becomes equivalent to searching for a distribution $\mu_\theta$ whose characteristic function satisfy $$F({\mathbf x}, D)[\chi_{\mu_\theta}]\approx f$$ where $f=F({\mathbf x}, D)[\chi_{\mu_{\rm emp}}]$.

From the latter proposition the following is straightforward.
\begin{proposition} Let $F=F_1+F_2$, then
$${\rm MMD}_{K^{F}}(u,v)\leq {\rm MMD}_{K^{F_1}}(u,v)+{\rm MMD}_{K^{F_2}}(u,v)$$
%If, ${\rm O}_{F_1}^\dag {\rm O}_{F_2} = 0$, then 
%$${\rm MMD}_{K^{F}}(u,v) \geq {\rm MMD}_{K^{F_1}}(u,v)^2 +{\rm MMD}_{K^{F_2}}(u,v)^2-$$
%If, additionally $F_1({\mathbf x}, D)^\dag F_2({\mathbf x}, D) = 0$, then 
%$${\rm MMD}_{K^{F}}(u,v) =\sqrt{{\rm MMD}_{K^{F_1}}(u,v)^2 +{\rm MMD}_{K^{F_2}}(u,v)^2}$$
\end{proposition}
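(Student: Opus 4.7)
The plan is to reduce the claim to the triangle inequality in $L_2$ via the representation obtained in the preceding proposition.

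First, I would recall that the previous proposition effectively gives
$${\rm MMD}_{K^F}(u,v) \propto \|F({\mathbf x}, D)[\chi_u] - F({\mathbf x}, D)[\chi_v]\|_{L_2} = \|F({\mathbf x}, D)[\chi_u - \chi_v]\|_{L_2},$$
where the proportionality constant is the universal normalization constant coming from the definition of $K^F$ (dictated by the Fourier transform convention in the definition ${\rm O}_{K^F} \propto \mathcal{F}\circ F({\mathbf x},D)^\dag\circ F({\mathbf x},D)\circ \mathcal{F}^{-1}$), and hence is the same for $F$, $F_1$, and $F_2$. Since $\chi_u - \chi_v \in {\mathcal S}({\mathbb R}^n)$, the expression $F({\mathbf x}, D)[\chi_u - \chi_v]$ makes sense.

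Next I would use the linearity of the symbol-to-operator map: for $F = F_1 + F_2$ (pointwise sum of symbols), one has $F({\mathbf x}, D) = F_1({\mathbf x}, D) + F_2({\mathbf x}, D)$ as operators on ${\mathcal S}({\mathbb R}^n)$, directly from the integral definition $F({\mathbf x}, D) u({\mathbf x}) = \int F({\mathbf x},{\mathbf y})\hat u({\mathbf y}) e^{{\rm i}{\mathbf x}^T{\mathbf y}} d{\mathbf y}$. Therefore
$$F({\mathbf x}, D)[\chi_u - \chi_v] = F_1({\mathbf x}, D)[\chi_u - \chi_v] + F_2({\mathbf x}, D)[\chi_u - \chi_v].$$
Applying the $L_2$ triangle inequality and invoking the previous proposition again for $F_1$ and $F_2$ yields the desired bound.

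The only mildly delicate point, which I would address explicitly, is that $F_1, F_2 \in S^{m}_{1,0}$ for appropriate orders $m$ so that the symbol class is closed under addition (one can always take $m$ to be the larger of the two orders), guaranteeing that $F_1({\mathbf x}, D)$, $F_2({\mathbf x}, D)$, and $F({\mathbf x}, D)$ are all continuous maps ${\mathcal S}({\mathbb R}^n) \to {\mathcal S}({\mathbb R}^n)$. No genuine obstacle arises; the result is essentially the triangle inequality pushed through the identification of ${\rm MMD}_{K^F}$ with an $L_2$-norm.
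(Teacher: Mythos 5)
Your proof is correct and follows essentially the same route the paper intends: the paper derives this as an immediate consequence of the preceding proposition, using linearity of the symbol-to-operator map $F \mapsto F(\cdot, D)$ together with subadditivity of the supremum (equivalently, your triangle inequality in $L_2$, since the sup representation is obtained from the $L_2$-norm form). Your explicit attention to the proportionality constants and to closure of the symbol classes under addition is a reasonable tightening of what the paper leaves implicit.
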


The last representation of MMD distance allows to put the MMD minimization into the minimax framework.
Let us now consider a pseudo-differential operator with a symbol $F$ such that the range of ${\rm O}_F$ is finite-dimensional. 
\subsection{A finite sum case}\label{finite-case}
In applications, a finite sum case is important:
\begin{equation}\label{finite}
F({\mathbf x},{\mathbf y}) = \sum_{i=1}^l f_i({\mathbf x}) g_i({\mathbf y})
\end{equation}
where $\widehat{f_i}\in L_{1}({\mathbb R}^n)$. The latter implies that $f_i$ is bounded. 
\begin{proposition}\label{simple} If $F$ satisfies~\eqref{finite}, then it can be represented as $F({\mathbf x},{\mathbf y}) = \sum_{i=1}^L h_i({\mathbf x}) t_i({\mathbf y})$ where $L\leq 4l$, $\widehat{h_i}$ is a pdf, i.e. $\widehat{h_i} ({\mathbf x})\geq 0, \int_{{\mathbb R}^n}\widehat{h_i} ({\mathbf x})d{\mathbf x} = 1$.
\end{proposition}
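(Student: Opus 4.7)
The goal is to decompose each $f_i$ into a signed combination of at most four functions whose Fourier transforms are probability densities; the representation of $F$ then reorganizes into at most $4l$ separable rank-one terms.

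\textbf{Step 1: split $\widehat{f_i}$ into non-negative pieces.} Since $\widehat{f_i}\in L_1({\mathbb R}^n)$, decompose it into real and imaginary parts and each of those into positive and negative parts:
\[
\widehat{f_i}(\boldsymbol{\xi}) = \phi_{i,1}(\boldsymbol{\xi}) - \phi_{i,2}(\boldsymbol{\xi}) + {\rm i}\bigl(\phi_{i,3}(\boldsymbol{\xi}) - \phi_{i,4}(\boldsymbol{\xi})\bigr),
\]
where $\phi_{i,1}=(\mathrm{Re}\,\widehat{f_i})^+$, $\phi_{i,2}=(\mathrm{Re}\,\widehat{f_i})^-$, $\phi_{i,3}=(\mathrm{Im}\,\widehat{f_i})^+$, $\phi_{i,4}=(\mathrm{Im}\,\widehat{f_i})^-$. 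Each $\phi_{i,j}\geq 0$ and lies in $L_1({\mathbb R}^n)$.

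\textbf{Step 2: normalize each non-negative piece to a pdf.} Set $c_{i,j}=\int_{{\mathbb R}^n}\phi_{i,j}(\boldsymbol{\xi})\,d\boldsymbol{\xi}\geq 0$ and, whenever $c_{i,j}>0$, define $\widehat{h_{i,j}}=\phi_{i,j}/c_{i,j}$, so that $\widehat{h_{i,j}}$ is a probability density. Taking the inverse Fourier transform gives bounded continuous functions $h_{i,j}$, and from Step 1
\[
f_i({\mathbf x})=\sum_{j=1}^{4}\epsilon_{i,j}\,c_{i,j}\,h_{i,j}({\mathbf x}),\qquad \epsilon_{i,j}\in\{1,-1,{\rm i},-{\rm i}\},
\]
where terms with $c_{i,j}=0$ are simply omitted.

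\textbf{Step 3: reassemble $F$ and count terms.} Substituting into the hypothesis gives
\[
F({\mathbf x},{\mathbf y})=\sum_{i=1}^{l}\sum_{j=1}^{4} h_{i,j}({\mathbf x})\,\bigl(\epsilon_{i,j}c_{i,j}g_i({\mathbf y})\bigr),
\]
which, after dropping vanishing terms and reindexing the surviving pairs $(i,j)$ as $k=1,\dots,L$ with $h_k:=h_{i,j}$ and $t_k:=\epsilon_{i,j}c_{i,j}g_i$, is precisely the desired representation with $L\leq 4l$.

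\textbf{Main obstacle.} There is really no deep obstacle here: the argument is an elementary bookkeeping exercise combining the Jordan decomposition of the complex measure $\widehat{f_i}(\boldsymbol{\xi})\,d\boldsymbol{\xi}$ with normalization. The only mildly delicate point is ensuring that one handles degenerate cases properly: if some $c_{i,j}=0$, the corresponding $\widehat{h_{i,j}}$ is not defined, but then that term is zero and can be dropped without affecting $F$ or the bound $L\leq 4l$.
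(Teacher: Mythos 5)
Your proof is correct and follows essentially the same route as the paper's: a real/imaginary split of $\widehat{f_i}$ followed by a positive/negative (Jordan) split, with normalization of each non-negative piece to a pdf and absorption of the constants and signs into the $g_i$ factor, yielding at most $4l$ terms. Working directly on the Fourier side is in fact slightly cleaner than the paper's first step (which splits $f_i$ into even and odd parts and implicitly assumes this makes $\widehat{h_i}$ real-valued), but the substance is identical.
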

\ifTR
\begin{proof} By substituting each term $f_i({\mathbf x}) g_i({\mathbf y}) = (\frac{f_i({\mathbf x})+f_i(-{\mathbf x})}{2}+\frac{f_i({\mathbf x})-f_i(-{\mathbf x})}{2}) g_i({\mathbf y})$ with two terms $\frac{f_i({\mathbf x})+f_i(-{\mathbf x})}{2} g_i({\mathbf y}) +\frac{f_i({\mathbf x})-f_i(-{\mathbf x})}{2{\rm i}} {\rm i} g_i({\mathbf y})$ we can turn a sum with $l$ terms into a sum with $2l$ terms in such a way that in all resulting terms $h_i({\mathbf x}) t_i({\mathbf y})$, $\widehat{h}_i({\mathbf x})$ is a real valued function. This operation costs us the doubling of the number of terms. %The next we make $h_i({\mathbf x})$ nonnegative.

Let us now consider the term $f_i({\mathbf x}) g_i({\mathbf y})$ such that $\widehat{f}_i({\mathbf x})$ is a real valued function. 

Suppose $\mathcal{F}[f_i] = f'_i$ and both inverse images $(f'_i)^{-1}[{\mathbb R}_+]$ and $(f'_i)^{-1}[{\mathbb R}_-]$ have nonzero Lebegue measure.
Note that $f'_i\in L_{1}({\mathbb R}^n)$, therefore $f'_i({\mathbf x}) = \alpha p^+_i({\mathbf x})-\beta p^-_i({\mathbf x})$, where $p^+_i({\mathbf x}) = \max\{0,f'_i({\mathbf x})\}/\alpha$, $p^-_i({\mathbf x}) = -\min\{0,f'_i({\mathbf x})\}/\beta$ and $\alpha = ||\max\{0,f'_i({\mathbf x})\}||_{L_1}$, $\beta = ||\min\{0,f'_i({\mathbf x})\}||_{L_1}$. Thus, we can substitute $f_i({\mathbf x}) g_i({\mathbf y})$ with $\tilde{f}^1_i({\mathbf x}) \tilde{g}^1_i({\mathbf y})+\tilde{f}^2_i({\mathbf x}) \tilde{g}^2_i({\mathbf y})$ where $\tilde{f}^1_i = \mathcal{F}^{-1}[p^+_i]$,  $g^1_i({\mathbf y}) = \alpha g_i({\mathbf y})$, $\tilde{f}^2_i = \mathcal{F}^{-1}[p^-_i]$ and  $g^2_i({\mathbf y}) = -\beta g_i({\mathbf y})$.

If $f'_i({\mathbf x})\geq 0$ a.s., then we can substitute $f_i({\mathbf x}) g_i({\mathbf y})$ with $\tilde{f}_i({\mathbf x}) \tilde{g}_i({\mathbf y})$ where $\tilde{f}_i({\mathbf x}) = \frac{f_i({\mathbf x})}{||f'_i||_{L_1}}$ and  $\tilde{g}_i({\mathbf y}) = ||f'_i||_{L_1} g_i({\mathbf y})$. Analogously the case $f_i({\mathbf x})\leq 0$ is treated. 

Thus, any such term can be substituted with 2 terms of the form $h_i({\mathbf x}) t_i({\mathbf y})$ where $\widehat{h_i}$ is a pdf. Both operations cost at most the doubling of the number of terms. If we apply them sequentially, then we obtain at most $4l$ terms.
\end{proof}
\else
\fi

Due to proposition~\ref{simple}, let us additionally assume in~\eqref{finite} that $\widehat{f_i} = p_i$ and $p_i ({\mathbf x})\geq 0, \int_{{\mathbb R}^n}p_i ({\mathbf x})d{\mathbf x} = 1$.
Then, we obtain:
\begin{equation*}
\begin{split}
f({\mathbf x}) \stackrel{F(D, {\mathbf x})}{\to} \sum_{i=1}^l g^\ast_i \mathcal{F}[f^\ast_i\mathcal{F}^{-1}[f]] \propto \sum_{i=1}^l g^\ast_i({\mathbf x}) (q_i\ast f)({\mathbf x})
\end{split}
\end{equation*}
where $q_i ({\mathbf x}) = p_i(-{\mathbf x})=\widehat{f^\ast_i}$.
Thus, if $u$ is a smooth probability density function, then:
\begin{equation*}
\begin{split}
{\mathbb E}_{{\mathbf x}\sim u}F(D, {\mathbf x}) f ({\mathbf x}) \propto {\mathbb E}_{{\mathbf x}\sim u} \sum_{i=1}^l g^\ast_i({\mathbf x}) (q_i\ast f)({\mathbf x})  = \\
\sum_{i=1}^l \int_{{\mathbb R}^n\times {\mathbb R}^n} u({\mathbf x}) g^\ast_i({\mathbf x}) q_i({\mathbf y}) f({\mathbf x}-{\mathbf y}) d{\mathbf x} d {\mathbf y} = \\
\sum_{i=1}^l {\mathbb E}_{{\mathbf x}\sim u, {\mathbf y}\sim q_i} g^\ast_i({\mathbf x})  f({\mathbf x}-{\mathbf y})
\end{split}
\end{equation*}
Now, using~\eqref{derivation} we finally obtain a characterization of MMD as supremum:
\begin{equation}\label{gan-pure}
\begin{split}
{\rm MMD}_{K^F}(u,v) \propto \\
\sup_{f: ||f||_{L_2({\mathbb R}^n)}\leq 1}\sum_{i=1}^l {\mathbb E}_{{\mathbf x}\sim u, {\mathbf y}\sim q_i} g^\ast_i({\mathbf x})  f({\mathbf x}-{\mathbf y}) - \\
{\mathbb E}_{{\mathbf x}\sim v, {\mathbf y}\sim q_i} g^\ast_i({\mathbf x})  f({\mathbf x}-{\mathbf y})
\end{split}
\end{equation}
Note that if $l=1$ and $g_1({\mathbf x})=1$, ${\rm MMD}_{K^F}(u,v)$ is proportional to:
$$
\sup\limits_{f: ||f||_{L_2({\mathbb R}^n)}\leq 1} \mathop{\mathbb E}\limits_{{\mathbf x}\sim u, \boldsymbol{\epsilon}\sim p_1} f({\mathbf x}+\boldsymbol{\epsilon}) - \mathop{\mathbb E}\limits_{{\mathbf x}\sim v, \boldsymbol{\epsilon}\sim p_1}  f({\mathbf x}+\boldsymbol{\epsilon})
$$
where $q_1({\mathbf x}) = p_1(-{\mathbf x})$.
In this case we have $K({\mathbf x}, {\mathbf y}) = \widehat{f^2_1}({\mathbf x}-{\mathbf y})$ (the translation invariant kernel case~\cite{ghiasi-shirazi10a}).
This case captures the Gaussian kernel, the Poisson kernel, the Abel kernel, and many other kernels. 

From the last expression the GAN-style interpretation of generative networks based on MMD distance directly follows. In the minimax task~\eqref{main-problem}, which is equivalent to $\min\limits_{\theta}\sup\limits_{f: ||f||_{L_2({\mathbb R}^n)}\leq 1} {\mathbb E}_{{\mathbf x}\sim \mu_\theta, \boldsymbol{\epsilon}\sim p_1} f({\mathbf x}+\boldsymbol{\epsilon}) - {\mathbb E}_{{\mathbf x}\sim \mu_{\rm emp}, \boldsymbol{\epsilon}\sim p_1}  f({\mathbf x}+\boldsymbol{\epsilon})$, the generator always adds noise, distributed according to $p_1({\mathbf x})$, to vectors generated from empirical and model distributions.
The critic optimizes over functions from unit ball in $L_2({\mathbb R}^n)$ and tries to increase the $f$'s value in regions for which probabilities of $\mu_{\rm emp}$ and $\mu_{\theta}$ differ.

Note that the GAN-style interpretation of~\eqref{gan-pure} is also straightforward. Let us introduce:
\begin{equation}\label{local-moment}
\begin{split}
m_i ({\mathbf t}, u) = {\mathbb E}_{{\mathbf x}\sim u, \boldsymbol{\epsilon}\sim p_i} [g^\ast_i({\mathbf x}) |  {\mathbf x} + \boldsymbol{\epsilon} = {\mathbf t}]
\end{split}
\end{equation}
for $q_i({\mathbf x}) = p_i(-{\mathbf x})$ and $i\in [l]$. The expression~\eqref{local-moment} is the expectation of the feature $g^\ast_i({\mathbf x})$ when ${\mathbf x}$ is sampled according to distribution $u$, under condition that the sum of 
${\mathbf x}$ and the noise vector $\boldsymbol{\epsilon}$ equals ${\mathbf t}$. In other words, $m_i ({\mathbf t}, u)$ is a $g^\ast_i$-moment of $u$ in the neighbourhood of ${\mathbf t}$.

Thus,
\begin{equation*}
\begin{split}
{\rm MMD}_{K^F}(u,v) \propto \sup_{f: ||f||_{L_2({\mathbb R}^n)}\leq 1}\sum_{i=1}^l\\
 {\mathbb E}_{{\mathbf x}\sim u, \boldsymbol{\epsilon}\sim p_i} f({\mathbf x} + \boldsymbol{\epsilon}) \big(m_i ({\mathbf x} + \boldsymbol{\epsilon}, u)-m_i ({\mathbf x} + \boldsymbol{\epsilon}, v)\big)
\end{split}
\end{equation*}
From the latter formula we can make the following conclusions:
\begin{itemize}
\item The critic tries to increase $f$ in regions for which ``local'' expectations of $g^\ast_i({\mathbf x})$ for empirical and model distributions, i.e. $m_i ({\mathbf x} + \boldsymbol{\epsilon}, u)-m_i ({\mathbf x} + \boldsymbol{\epsilon}, v)$, differ sharply. Thus, the MMD distance measures the difference between distributions of the feature vector $G({\mathbf x}) = [g^\ast_i({\mathbf x})]_{i=1}^l$  for ${\mathbf x}\sim u$ and ${\mathbf x}\sim v$ around each point ${\mathbf t}$.
\item Different features affect the critic's objective simultaneously. That is why it is important to generate noise for different features, $g^\ast_i$ and $g^\ast_j$, according to different distributions, $p_i$ and $p_j$. Otherwise, if $p_i = p_j$, then the MMD distance will compare the distributions of a single feature, $g^\ast_i + g^\ast_j$. For example, suppose $p_i$ is a pdf of the Gaussian noise with the bandwidth $\sigma_i$. In the design of MMD's kernel, the bandwidth $\sigma_i$ rather than being the same for all features, better be adapted to the feature $g^\ast_i$. E.g. for the kurtosis of a distribution (4th moment), it is natural to have a larger bandwidth than for the expected value.
\end{itemize}

Let us denote ${\rm Feat}_F = \{\lambda_1g_1+...+\lambda_l g_l| \lambda_i\in {\mathbb C}\}$, i.e. the span of those functions.
Thus, it becomes clear that ${\rm Feat}_F$ serves as a space of features whose distribution we compare. The dimension of  ${\rm Feat}_F $ can be naturally called a dimension of the kernel $K^F$. It is easy to see that this dimension equals ${\rm dim}\, \mathcal{R} [O^\dag_{F}]$.

\section{A natural norm on PDOs defining kernels}
For an operator $O: \mathcal{S}({\mathbb R}^n)\to \mathcal{S}({\mathbb R}^n)$ let us define
$$||O||_{2,\infty} = \sup\limits_{f\in {\mathcal S}({\mathbb R}^n): ||f||_{L_2({\mathbb R}^n)}\leq 1}|| Of({\mathbf x})||_{L_\infty({\mathbb R}^n)} .$$

For any function $F: {\mathbb R}^n \times {\mathbb R}^n\to {\mathbb R}$ such that $A = \sup_{\mathbf x} \int_{{\mathbb R}^n} |F({\mathbf x},{\mathbf y})|^2 d{\mathbf y} < \infty$, we have $||{\rm O}_{F}||_{2,\infty} = \sqrt{A}$. In other words, the norm $||\cdot||_{2,\infty}$ can be easily calculated for integral operators. The following two proposition show that the calculation of $||F({\mathbf x}, D)||_{2,\infty}$ is also simple.

\begin{proposition}\label{diag} If $F\in S^{m}_{1,0}, ||{\rm O}_{F}||_{2,\infty}<\infty$, then diagonal elements of Schwartz kernels of $F({\mathbf x}, D) F({\mathbf x}, D)^\dag$ and ${\rm O}_{F}{\rm O}^\dag_{F}$ are the same, i.e. there are smooth functions $M,N$ such that $F({\mathbf x}, D) F({\mathbf x}, D)^\dag = {\rm O}_{M}$, ${\rm O}_{F}{\rm O}^\dag_{F} = {\rm O}_{N}$ and $M(\boldsymbol{\xi}, \boldsymbol{\xi}) \propto N(\boldsymbol{\xi}, \boldsymbol{\xi})$.
\end{proposition}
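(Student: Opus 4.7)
The strategy is to write out both Schwartz kernels as iterated integrals and to recognise their diagonals as a single $L^2$-norm, equated by Plancherel's identity. Unwinding $F({\mathbf x}, D) u({\mathbf x}) = \int F({\mathbf x}, {\mathbf y}) \hat{u}({\mathbf y}) e^{{\rm i}{\mathbf x}^T{\mathbf y}} d{\mathbf y}$ by substituting $\hat{u}({\mathbf y}) = \int u({\mathbf z}) e^{-{\rm i}{\mathbf z}^T{\mathbf y}} d{\mathbf z}$ and interchanging the order of integration, the PDO $F({\mathbf x}, D)$ is realised as the integral operator with Schwartz kernel
$$
K_F({\mathbf x}, {\mathbf z}) = \int_{{\mathbb R}^n} F({\mathbf x}, {\mathbf y}) e^{{\rm i}({\mathbf x}-{\mathbf z})^T{\mathbf y}} d{\mathbf y},
$$
that is, a translate of the inverse Fourier transform of $F({\mathbf x}, \cdot)$. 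The hypothesis $||{\rm O}_F||_{2,\infty}<\infty$ is equivalent to $A := \sup_{{\mathbf x}}\int |F({\mathbf x}, {\mathbf y})|^2 d{\mathbf y} < \infty$, as noted just above the proposition, so $F({\mathbf x},\cdot)\in L^2({\mathbb R}^n)$ uniformly in ${\mathbf x}$ and $K_F({\mathbf x},\cdot)$ is an honest $L^2$-function rather than only a tempered distribution.

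Composition is then standard. Since $F({\mathbf x}, D)^\dag$ has Schwartz kernel $({\mathbf w},{\mathbf z})\mapsto \overline{K_F({\mathbf z}, {\mathbf w})}$, the kernel of $F({\mathbf x}, D) F({\mathbf x}, D)^\dag$ is
$$
M({\mathbf x}, {\mathbf z}) = \int_{{\mathbb R}^n} K_F({\mathbf x}, {\mathbf w})\, \overline{K_F({\mathbf z}, {\mathbf w})}\, d{\mathbf w},
$$
and, since ${\rm O}_F$ itself has kernel $F({\mathbf x}, {\mathbf y})$, the kernel of ${\rm O}_F{\rm O}_F^\dag$ is
$$
N({\mathbf x}, {\mathbf z}) = \int_{{\mathbb R}^n} F({\mathbf x}, {\mathbf y})\, \overline{F({\mathbf z}, {\mathbf y})}\, d{\mathbf y}.
$$
Setting ${\mathbf x}={\mathbf z}=\boldsymbol{\xi}$ and applying Plancherel to $K_F(\boldsymbol{\xi}, \cdot)$ yields
$$
M(\boldsymbol{\xi}, \boldsymbol{\xi}) = ||K_F(\boldsymbol{\xi}, \cdot)||_{L^2}^2 = (2\pi)^n ||F(\boldsymbol{\xi}, \cdot)||_{L^2}^2 = (2\pi)^n N(\boldsymbol{\xi}, \boldsymbol{\xi}),
$$
which is the desired proportionality.

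The delicate point is legitimising the pointwise expression for $K_F$: for a generic $F\in S^m_{1,0}$ the oscillatory integral above converges only distributionally and the resulting kernel is singular on the diagonal ${\mathbf x}={\mathbf z}$. The hypothesis $||{\rm O}_F||_{2,\infty}<\infty$ is exactly what removes this obstruction, by forcing $F({\mathbf x},\cdot)\in L^2$ and thereby justifying both the interchange of integrals above and the application of Plancherel. Smoothness of $M$ and $N$ then follows by differentiating under the integral sign using the $S^m_{1,0}$-estimates on the derivatives of $F$; alternatively, for $M$ one can invoke the standard PDO composition calculus, which shows that $F({\mathbf x},D)F({\mathbf x},D)^\dag$ is itself a PDO (of order $2m$) whose Schwartz kernel inherits the usual regularity.
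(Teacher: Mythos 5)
Your proof is correct and follows essentially the same route as the paper's: both compute the Schwartz kernel of $F({\mathbf x},D)F({\mathbf x},D)^\dag$, restrict to the diagonal, and identify it with $\int |F(\boldsymbol{\xi},{\mathbf y})|^2\,d{\mathbf y}$ via Plancherel/Parseval. Your packaging is slightly cleaner --- you recognise the diagonal directly as $\|K_F(\boldsymbol{\xi},\cdot)\|_{L^2}^2$ instead of passing through the convolution-theorem manipulation the paper uses --- and your remark on why $\|{\rm O}_F\|_{2,\infty}<\infty$ legitimises the pointwise kernel and the use of Plancherel is a welcome addition, but the underlying argument is the same.
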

\begin{proof} The Schwartz kernel of $F({\mathbf x}, D)$ is $K_1(\boldsymbol{\xi}', {\mathbf x}) = \int e^{{\rm i}\boldsymbol{\xi}'^T{\mathbf y}'} F(\boldsymbol{\xi}', {\mathbf y}') e^{-{\rm i}{\mathbf x}^T{\mathbf y}'} d {\mathbf y}' $. Therefore, the Schwartz kernel of $F({\mathbf x}, D)^\dag$ is $K_2( {\mathbf x}, \boldsymbol{\xi}) = K_1(\boldsymbol{\xi},{\mathbf x})^\ast = \int e^{{\rm i}{\mathbf x}^T{\mathbf y}} F(\boldsymbol{\xi}, {\mathbf y})^\ast e^{-{\rm i}\boldsymbol{\xi}^T{\mathbf y}} d {\mathbf y}$.
Thus,
\begin{equation*}
\begin{split}
\langle  u,  F({\mathbf x}, D)F({\mathbf x}, D)^\dag v\rangle \propto \\
\int u(\boldsymbol{\xi}')^\ast e^{{\rm i}\boldsymbol{\xi}'^T{\mathbf y}'} F(\boldsymbol{\xi}', {\mathbf y}') e^{-{\rm i}{\mathbf x}^T{\mathbf y}'} e^{{\rm i}{\mathbf x}^T{\mathbf y}} F(\boldsymbol{\xi}, {\mathbf y})^\ast e^{-{\rm i}\boldsymbol{\xi}^T{\mathbf y}} v(\boldsymbol{\xi}) \\ d \boldsymbol{\xi} d {\mathbf y} d {\mathbf x}  d {\mathbf y}' d \boldsymbol{\xi}'
\end{split}
\end{equation*}
Thus, the Schwartz kernel of $F({\mathbf x}, D)F({\mathbf x}, D)^\dag$ is
\begin{equation*}
\begin{split}
M(\boldsymbol{\xi}',\boldsymbol{\xi}) = \\
\int e^{{\rm i}\boldsymbol{\xi}'^T{\mathbf y}'} F(\boldsymbol{\xi}', {\mathbf y}') e^{-{\rm i}{\mathbf x}^T{\mathbf y}'} e^{{\rm i}{\mathbf x}^T{\mathbf y}} F(\boldsymbol{\xi}, {\mathbf y})^\ast e^{-{\rm i}\boldsymbol{\xi}^T{\mathbf y}}  d {\mathbf y} d {\mathbf y}'   d {\mathbf x} 
\end{split}
\end{equation*}
The diagonal element equals:
\begin{equation*}
\begin{split}
M(\boldsymbol{\xi},\boldsymbol{\xi}) = \\
\int e^{{\rm i}\boldsymbol{\xi}^T{\mathbf y}'} F(\boldsymbol{\xi}, {\mathbf y}') e^{-{\rm i}{\mathbf x}^T{\mathbf y}'} e^{{\rm i}{\mathbf x}^T{\mathbf y}} F(\boldsymbol{\xi}, {\mathbf y})^\ast e^{-{\rm i}\boldsymbol{\xi}^T{\mathbf y}}  d {\mathbf y} d {\mathbf y}'   d {\mathbf x}  = \\
\int e^{-{\rm i}(\boldsymbol{\xi}-{\mathbf x})^T({\mathbf y}-{\mathbf y}')} F(\boldsymbol{\xi}, {\mathbf y}') F(\boldsymbol{\xi}, {\mathbf y})^\ast   d {\mathbf y} d {\mathbf y}' d {\mathbf x}   = \\
{\rm after\,\,we\,\,set\,\,}G_{\mathbf b}({\mathbf a}) = F({\mathbf b},-{\mathbf a}), F_{\mathbf b}({\mathbf a}) = F({\mathbf b}, {\mathbf a})^\ast \\
\int e^{-{\rm i}(\boldsymbol{\xi}-{\mathbf x})^T\boldsymbol{\lambda}} G_{\boldsymbol{\xi}}(\boldsymbol{\lambda}-{\mathbf y}) F_{\boldsymbol{\xi}}({\mathbf y})  d {\mathbf y} d \boldsymbol{\lambda}  d {\mathbf x}  = \\
\int e^{-{\rm i}(\boldsymbol{\xi}-{\mathbf x})^T\boldsymbol{\lambda}} \big\{G_{\boldsymbol{\xi}}\ast F_{\boldsymbol{\xi}}\big\} (\boldsymbol{\lambda})  d \boldsymbol{\lambda} d {\mathbf x}   \propto \\
\int  \mathcal{F} \big[\big\{G_{\boldsymbol{\xi}}\ast F_{\boldsymbol{\xi}}\big\} \big](\boldsymbol{\xi}-{\mathbf x}) d {\mathbf x} =  
\int  \widehat{G}_{\boldsymbol{\xi}}(\boldsymbol{\xi}-{\mathbf x}) \widehat{F_{\boldsymbol{\xi}}}(\boldsymbol{\xi}-{\mathbf x}) d {\mathbf x} = \\ 
\langle \widehat{G}_{\boldsymbol{\xi}}^\ast, \widehat{F_{\boldsymbol{\xi}}} \rangle_{L_2} = \langle G_{\boldsymbol{\xi}} (-{\mathbf x})^\ast, F_{\boldsymbol{\xi}}({\mathbf x}) \rangle_{L_2} = \int F(\boldsymbol{\xi}, {\mathbf x}) F(\boldsymbol{\xi}, {\mathbf x})^\ast d{\mathbf x}
\end{split}
\end{equation*}
The latter equals the diagonal element of ${\rm O}_{F}{\rm O}^\dag_{F}$.
\end{proof}

\begin{proposition} If $F\in S^{m}_{1,0}$, then
\begin{equation}
\begin{split}
||F({\mathbf x}, D)||_{2,\infty} \propto ||{\rm O}_{F}||_{2,\infty} = \sqrt{\sup_{\mathbf x} \int_{{\mathbb R}^n} |F({\mathbf x},{\mathbf y})|^2 d{\mathbf y}}
\end{split}
\end{equation}
\end{proposition}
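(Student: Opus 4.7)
The plan is to reduce both sides of the claimed proportionality to the supremum of the diagonal of the Schwartz kernel of $O O^\dag$, and then invoke Proposition~\ref{diag} to relate these diagonals. First I would establish the following general fact: for any continuous linear operator $O:\mathcal{S}({\mathbb R}^n)\to \mathcal{S}({\mathbb R}^n)$ whose Schwartz kernel is represented by a locally integrable function $K(\boldsymbol{\xi},{\mathbf y})$ with $\sup_{\boldsymbol{\xi}}\int |K(\boldsymbol{\xi},{\mathbf y})|^2 d{\mathbf y}<\infty$, one has
\begin{equation*}
\|O\|_{2,\infty}^2 \;=\; \sup_{\boldsymbol{\xi}} \int_{{\mathbb R}^n} |K(\boldsymbol{\xi},{\mathbf y})|^2 d{\mathbf y}.
\end{equation*}
The upper bound is Cauchy--Schwarz applied pointwise in $\boldsymbol{\xi}$, and the lower bound is obtained by taking $f$ to be a Schwartz regularization of $\overline{K(\boldsymbol{\xi}_0,\cdot)}/\|K(\boldsymbol{\xi}_0,\cdot)\|_{L_2}$ for $\boldsymbol{\xi}_0$ achieving the supremum up to $\varepsilon$, and then letting the mollification parameter tend to zero.

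Next, I would observe that the quantity $\int |K(\boldsymbol{\xi},{\mathbf y})|^2 d{\mathbf y}$ is precisely the diagonal $(\boldsymbol{\xi},\boldsymbol{\xi})$ of the Schwartz kernel of the composition $O O^\dag$. Applying the identity above with $O={\rm O}_F$, whose Schwartz kernel is $F(\boldsymbol{\xi},{\mathbf y})$, immediately gives
\begin{equation*}
\|{\rm O}_F\|_{2,\infty}^2 \;=\; \sup_{\boldsymbol{\xi}} \int_{{\mathbb R}^n} |F(\boldsymbol{\xi},{\mathbf y})|^2 d{\mathbf y} \;=\; \sup_{\boldsymbol{\xi}} N(\boldsymbol{\xi},\boldsymbol{\xi}),
\end{equation*}
with $N$ as in Proposition~\ref{diag}. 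Applying the same identity to $O=F({\mathbf x},D)$, whose $OO^\dag$-Schwartz kernel is $M(\boldsymbol{\xi}',\boldsymbol{\xi})$, yields $\|F({\mathbf x},D)\|_{2,\infty}^2=\sup_{\boldsymbol{\xi}}M(\boldsymbol{\xi},\boldsymbol{\xi})$. Proposition~\ref{diag} then provides $M(\boldsymbol{\xi},\boldsymbol{\xi})\propto N(\boldsymbol{\xi},\boldsymbol{\xi})$, from which the proportionality claimed in the proposition follows upon taking square roots.

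The main obstacle is the first step, because the Schwartz kernel $K_1(\boldsymbol{\xi},{\mathbf x})=\int e^{{\rm i}\boldsymbol{\xi}^T{\mathbf y}'} F(\boldsymbol{\xi},{\mathbf y}') e^{-{\rm i}{\mathbf x}^T{\mathbf y}'}d{\mathbf y}'$ of $F({\mathbf x},D)$ is a priori only a tempered distribution (an oscillatory integral), not a function, so the pointwise Cauchy--Schwarz argument cannot be applied literally. I would handle this by first working under the standing assumption $\|{\rm O}_F\|_{2,\infty}<\infty$, which through Proposition~\ref{diag} guarantees $\sup_{\boldsymbol{\xi}}M(\boldsymbol{\xi},\boldsymbol{\xi})<\infty$; this fact implies that $K_1(\boldsymbol{\xi},\cdot)\in L_2({\mathbb R}^n)$ for every $\boldsymbol{\xi}$ and hence promotes $K_1$ to a genuine measurable kernel to which the preceding argument applies. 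The mollification step used to saturate Cauchy--Schwarz within $\mathcal{S}({\mathbb R}^n)$ then proceeds by standard density arguments, completing the proof.
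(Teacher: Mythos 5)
Your proof is correct and follows essentially the same route as the paper: both reduce the $\|\cdot\|_{2,\infty}$ norm to the supremum of the diagonal of the Schwartz kernel of $OO^\dag$ and then invoke Proposition~\ref{diag} to transfer that diagonal from ${\rm O}_F{\rm O}_F^\dag$ to $F({\mathbf x},D)F({\mathbf x},D)^\dag$. The only difference is that you explicitly flag and resolve two technical points the paper glosses over --- saturating Cauchy--Schwarz within $\mathcal{S}({\mathbb R}^n)$ by mollification, and justifying that the Schwartz kernel of $F({\mathbf x},D)$ is a genuine $L_2$ function in its second argument (which also follows directly from Plancherel, since it is the Fourier transform of $e^{{\rm i}\boldsymbol{\xi}^T\cdot}F(\boldsymbol{\xi},\cdot)$) --- and your resolutions of both are sound.
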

\begin{proof} The norm of the integral operator ${\rm O}_{F}$ can be expressed as:
\begin{equation*}
\begin{split}
||{\rm O}_{F}||_{2,\infty} = \sup_{\mathbf x} \sup_{u: ||u||_{L_2({\mathbb R}^n)}\leq 1}\int F({\mathbf x},{\mathbf y}) u({\mathbf y}) d{\mathbf y} = \\
\sup_{\mathbf x} ||F({\mathbf x},\cdot)||_{L_2} = \sup_{\mathbf x}\sqrt{ \int_{{\mathbb R}^n} |F({\mathbf x},{\mathbf y})|^2 d{\mathbf y}} = \\
\sqrt{\sup_{\mathbf x} \int_{{\mathbb R}^n} F({\mathbf x},{\mathbf y})^\ast F({\mathbf x},{\mathbf y}) d{\mathbf y}} = \sqrt{\sup_{\mathbf x} N_F({\mathbf x},{\mathbf x})}
\end{split}
\end{equation*}
where $N_F$ is the Schwartz kernel of ${\rm O}_{F}{\rm O}^\dag_{F}$. Let ${\rm O}_{F'} = F({\mathbf x}, D)$. Due to proposition~\ref{diag}, diagonal elements of ${\rm O}_{F'}{\rm O}^\dag_{F'}$  and ${\rm O}_{F}{\rm O}^\dag_{F}$ are the same, i.e $N_{F'}({\mathbf x},{\mathbf x})=N_{F}({\mathbf x},{\mathbf x})$. Therefore, supremum of those diagonal elements, $\sup_{\mathbf x} N_F({\mathbf x},{\mathbf x})$ and $\sup_{\mathbf x} N_{F'}({\mathbf x},{\mathbf x})$, are also the same. Thus,  $||F({\mathbf x}, D)||_{2,\infty} = ||{\rm O}_{F'}||_{2,\infty} = ||{\rm O}_{F}||_{2,\infty}$.
\end{proof}
\begin{proposition}\label{norm} If $F_1\in S^{m_1}_{1,0}$ and $F_2\in S^{m_2}_{1,0}$, then
\begin{equation}
\begin{split}
|{\rm MMD}_{K^{F_1}}(u,v)-{\rm MMD}_{K^{F_1}}(u,v)|\leq \\
c||F_1(D, {\mathbf x})-F_2(D, {\mathbf x}))||_{2,\infty}
\end{split}
\end{equation}
where $c$ is some constant.
\end{proposition}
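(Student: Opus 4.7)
The plan is to use the GAN-style representation from the preceding proposition, namely
$$ {\rm MMD}_{K^{F_i}}(u,v) \propto \sup_{\|f\|_{L_2}\leq 1} \Bigl\{ {\mathbb E}_{X\sim u}\, F_i(D,{\mathbf x})f(X) - {\mathbb E}_{Y\sim v}\, F_i(D,{\mathbf x})f(Y) \Bigr\}, $$
and note that the bracketed expression is just $\langle F_i(D,{\mathbf x})f,\, u-v\rangle_{L_2}$. So the two MMD distances are suprema over the same unit ball in $L_2$ of two linear functionals of $f$, differing only in the operator $F_i(D,{\mathbf x})$ that gets applied to $f$.

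First I would apply the elementary inequality $|\sup_f A(f) - \sup_f B(f)|\leq \sup_f|A(f)-B(f)|$, which reduces the problem to bounding
$$ \sup_{\|f\|_{L_2}\leq 1}\Bigl|\langle (F_1(D,{\mathbf x})-F_2(D,{\mathbf x}))f,\; u-v\rangle_{L_2}\Bigr|. $$
Next I would apply Hölder's inequality in the $L_\infty$/$L_1$ duality:
$$ \bigl|\langle (F_1(D,{\mathbf x})-F_2(D,{\mathbf x}))f,\; u-v\rangle_{L_2}\bigr| \leq \|(F_1(D,{\mathbf x})-F_2(D,{\mathbf x}))f\|_{L_\infty}\,\|u-v\|_{L_1}. $$
By the very definition of $\|\cdot\|_{2,\infty}$, the first factor is bounded by $\|F_1(D,{\mathbf x})-F_2(D,{\mathbf x})\|_{2,\infty}\cdot\|f\|_{L_2}\leq \|F_1(D,{\mathbf x})-F_2(D,{\mathbf x})\|_{2,\infty}$. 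For the second factor, since $u$ and $v$ are probability density functions, $\|u-v\|_{L_1}\leq \|u\|_{L_1}+\|v\|_{L_1}=2$. Combining, the constant $c$ comes out to $2$ times whatever proportionality constant sits in front of the GAN-style representation of MMD.

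I do not anticipate any serious obstacle: the only subtlety is justifying that the supremum representation actually holds for the probability densities $u,v$ under consideration (which requires $u,v\in\mathcal{S}({\mathbb R}^n)$, inherited from the previous proposition) and that $(F_1-F_2)(D,{\mathbf x})$ continues to map $\mathcal{S}$ into $L_\infty$ so that $\|\cdot\|_{2,\infty}$ is finite — but this is built into the hypothesis that $\|F_1(D,{\mathbf x})-F_2(D,{\mathbf x})\|_{2,\infty}$ appears on the right-hand side (otherwise the bound is vacuous). Everything else is a routine Hölder plus sup-difference argument.
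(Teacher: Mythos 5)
Your proposal is correct and follows essentially the same route as the paper's proof: both use the supremum representation of ${\rm MMD}_{K^{F}}$ over the unit ball of $L_2$, reduce to the supremum of the difference of the two linear functionals, and then apply H\"older's inequality in the $L_\infty$/$L_1$ pairing together with $\|u-v\|_{L_1}\leq 2$. The only cosmetic difference is that you invoke $|\sup_f A(f)-\sup_f B(f)|\leq \sup_f|A(f)-B(f)|$ in one step, while the paper bounds each one-sided difference separately and says ``analogously'' for the other direction.
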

\begin{proof}
Using~\eqref{derivation} we can express:
\begin{equation*}
\begin{split}
{\rm MMD}_{K^{F_1}}(u,v)-{\rm MMD}_{K^{F_2}}(u,v) = \\
\sup_{f_1\in {\mathcal S}({\mathbb R}^n): ||f_1||_{L_2({\mathbb R}^n)}\leq 1} \langle F_1(D, {\mathbf x}) f_1, u-v\rangle_{L_2} -  \\
\sup_{f_2\in {\mathcal S}({\mathbb R}^n): ||f_2||_{L_2({\mathbb R}^n)}\leq 1} \langle F_2(D, {\mathbf x}) f_2, u-v\rangle_{L_2} \leq  \\
\sup_{f\in {\mathcal S}({\mathbb R}^n): ||f||_{L_2({\mathbb R}^n)}\leq 1} \langle (F_1(D, {\mathbf x})-F_2(D, {\mathbf x})) f, u-v\rangle_{L_2} \leq \\
{\rm using\,\,Holder\,\,inequality} \\
\sup_{f\in {\mathcal S}({\mathbb R}^n): ||f||_{L_2({\mathbb R}^n)}\leq 1} ||F_1(D, {\mathbf x})-F_2(D, {\mathbf x})) f||_{L_\infty} ||u-v||_{L_1} \\
\leq 2||F_1(D, {\mathbf x})-F_2(D, {\mathbf x}))||_{2,\infty}
\end{split}
\end{equation*}
because $||u-v||_{L_1}\leq 2$ and $||F_1(D, {\mathbf x})-F_2(D, {\mathbf x})) f||_{L_\infty}\leq ||F_1(D, {\mathbf x})-F_2(D, {\mathbf x}))||_{2,\infty}$. Analogously, we bound ${\rm MMD}_{K^{F_2}}(u,v)-{\rm MMD}_{K^{F_1}}(u,v)$ and this completes the proof.
\end{proof}
A natural norm on the PDO $F({\mathbf x}, D)$ defining the MMD distance $K^F$, is $||F(D, {\mathbf x})||_{2,\infty}$. A partial justification of this statement is the proposition~\eqref{norm} and the following proposition.
\begin{proposition}\label{remainder} Let $F\in S^{m}_{1,0}$, $f_i\in L_{2}({\mathbb R}^n)$, $g_i\in L_{\infty}({\mathbb R}^n)$ for $i=\overline{1,\infty}$, and $\sum_{i=1}^N f_i({\mathbf x}) g_i({\mathbf y})\stackrel{N\rightarrow \infty}{\to} F$ wrt $||\cdot||_{2,\infty}$ norm.  Then, 
\begin{equation*}
\begin{split}
||F(D, {\mathbf x})||_{2,\infty} \leq C\sum_{i=1}^\infty ||f_i||_{L_2} ||g_i||_{L_{\infty}}.
\end{split}
\end{equation*}
\end{proposition}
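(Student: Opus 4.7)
The plan is to handle the rank-one case by a direct computation and then sum over $i$. For a single term $F_i({\mathbf x},{\mathbf y}) = f_i({\mathbf x}) g_i({\mathbf y})$ with $f_i\in L_2$ and $g_i\in L_\infty$, I would unfold the definition $F_i(D,{\mathbf x}) = \mathcal{F}\circ F_i({\mathbf x},D)^\dag\circ \mathcal{F}^{-1}$ step by step. Starting from $F_i({\mathbf x},D)u \propto f_i\cdot \mathcal{F}^{-1}[g_i\hat{u}]$, two applications of Parseval's identity give $F_i({\mathbf x},D)^\dag v \propto \mathcal{F}^{-1}[\bar g_i\cdot \mathcal{F}[\bar f_i v]]$, and the outer $\mathcal{F}\circ \mathcal{F}^{-1}$ cancels to yield the clean rank-one formula
\[
F_i(D,{\mathbf x}) u({\mathbf x}) \;\propto\; \bar g_i({\mathbf x}) \cdot \mathcal{F}\bigl[\bar f_i \cdot \mathcal{F}^{-1} u\bigr]({\mathbf x}).
\]

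With this closed form, bounding a single rank-one contribution is a short chain of classical estimates: pointwise in ${\mathbf x}$ I would use $|g_i({\mathbf x})| \leq ||g_i||_{L_\infty}$, then the elementary $L_1\to L_\infty$ bound for the Fourier transform, then Cauchy-Schwarz on the product $\bar f_i\cdot \mathcal{F}^{-1}u$, and finally Plancherel $||\mathcal{F}^{-1}u||_{L_2}\propto ||u||_{L_2}$, arriving at $||F_i(D,{\mathbf x})||_{2,\infty} \leq C\,||f_i||_{L_2}\,||g_i||_{L_\infty}$. Linearity of the quantisation $F\mapsto F(D,{\mathbf x})$ together with the triangle inequality immediately propagates this to every finite partial sum $F_N({\mathbf x},{\mathbf y}) = \sum_{i=1}^N f_i({\mathbf x})g_i({\mathbf y})$, giving $||F_N(D,{\mathbf x})||_{2,\infty}\leq C\sum_{i=1}^N ||f_i||_{L_2}\,||g_i||_{L_\infty}$.

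To pass to the limit $N\to\infty$, I would split into two cases. If $\sum_i ||f_i||_{L_2}\,||g_i||_{L_\infty} = \infty$ the conclusion is vacuous; otherwise the partial sums $F_N(D,{\mathbf x})$ are Cauchy in $||\cdot||_{2,\infty}$ (by the rank-one estimate applied to tails) and converge to some bounded operator $G\colon L_2\to L_\infty$. The convergence hypothesis $\sup_{\mathbf x}\int |F_N({\mathbf x},{\mathbf y})-F({\mathbf x},{\mathbf y})|^2\,d{\mathbf y}\to 0$, tested against $\hat u({\mathbf y})e^{{\rm i}{\mathbf x}^T{\mathbf y}}$ via Cauchy-Schwarz in ${\mathbf y}$, produces $F_N({\mathbf x},D)u\to F({\mathbf x},D)u$ uniformly in ${\mathbf x}$ for each Schwartz $u$; transporting this through the adjoint and the outer Fourier conjugation identifies $G$ with $F(D,{\mathbf x})$, and lower semicontinuity of $||\cdot||_{2,\infty}$ along this convergence delivers the stated bound.

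The main obstacle I foresee is this last identification step. The convergence hypothesis controls $F_N-F$ through the integral operator ${\rm O}_{F_N-F}$, whereas the target bound is for the transformed operator $F(D,{\mathbf x})$, so one has to thread the Fourier conjugation carefully to show that the two limiting operators agree. Everything else---the rank-one formula, the $L_1\to L_\infty$ Fourier bound, Plancherel, and the triangle inequality---is mechanical once the closed form for $F_i(D,{\mathbf x})$ is in place.
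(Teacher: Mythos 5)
Your proposal is correct and follows essentially the same route as the paper's proof: the rank-one closed form $F_i(D,{\mathbf x})u \propto g_i^\ast\,\mathcal{F}[f_i^\ast\,\mathcal{F}^{-1}u]$, the pointwise bound via $\|g_i\|_{L_\infty}$, the $L_1\to L_\infty$ Fourier estimate combined with Cauchy--Schwarz and Plancherel to get $C\|f_i\|_{L_2}\|f\|_{L_2}$, the triangle inequality over partial sums, and passage to the limit using the hypothesized convergence $F_N\to F$. If anything, you are more explicit than the paper about the final identification of the limiting operator with $F(D,{\mathbf x})$, which the paper dispatches in one line.
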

\begin{proof} Let $F_N({\mathbf x}, {\mathbf y}) = \sum_{i=1}^N f_i({\mathbf x}) g_i({\mathbf y})$, $F_N({\mathbf x}, D)$ is a continuous operator from $L_{2}({\mathbb R}^n)$ to $L_{2}({\mathbb R}^n)$ defined by $F_N({\mathbf x}, D)f({\mathbf x}) = \int_{{\mathbb R}^n} F_N({\mathbf x}, {\mathbf y})\hat{f}({\mathbf y}) d{\mathbf y}$, and $F_N(D, {\mathbf x})$ is $\mathcal{F}\circ F_N({\mathbf x}, D) \circ \mathcal{F}^{-1}$. 

For any $f: ||f||_{L_{2}({\mathbb R}^n)}\leq 1$ we have
\begin{equation*}
\begin{split}
||F_N(D, {\mathbf x})f||_{L_{\infty}}\leq \sum_{i=1}^N ||g^\ast_i \mathcal{F}[f^\ast_i\mathcal{F}^{-1}[f]]||_{L_{\infty}} \leq \\ 
\sum_{i=1}^N ||g^\ast_i||_{L_{\infty}} ||\mathcal{F}[f^\ast_i\mathcal{F}^{-1}[f]]||_{L_{\infty}}
\end{split}
\end{equation*}
Since,
\begin{equation*}
\begin{split}
|\mathcal{F}[f^\ast_i\mathcal{F}^{-1}[f]](\boldsymbol{\xi})| \propto |\int f^\ast_i({\mathbf x}) \mathcal{F}^{-1}[f]({\mathbf x})e^{-{\rm i}\boldsymbol{\xi}^T{\mathbf x}}d{\mathbf x}| \leq \\
\int |f^\ast_i({\mathbf x}) \mathcal{F}^{-1}[f]({\mathbf x})| d{\mathbf x} \leq \\
{\rm Using\,\,Cauchy-Schwarz\,\,inequality}\\
\sqrt{\int |f^\ast_i({\mathbf x})|^2  d{\mathbf x} \int |\mathcal{F}^{-1}[f]({\mathbf x})|^2  d{\mathbf x}}\leq 
C||f_i||_{L_2} ||f||_{L_2} 
\end{split}
\end{equation*}
we obtain the bound $||F_N(D, {\mathbf x})||_{2,\infty}\leq C\sum_{i=1}^N ||f_i||_{L_2} ||g_i||_{L_{\infty}}$. Finally, using $F_N\stackrel{N\rightarrow \infty}{\to} F$ we obtain the needed inequality.
\end{proof}

%\begin{proposition} If $F\in S^{m}_{1,0}$, then
%\begin{equation}
%\begin{split}
%||F({\mathbf x}, D)||_{2,\infty}\leq  ||F({\mathbf x}, D)||_{2} ||F({\mathbf x}, D)||_\ast
%\end{split}
%\end{equation}
%where
%\begin{equation}
%||F({\mathbf x}, D)||_\ast = ||F({\mathbf x}, D)(F({\mathbf x}, D)^\dag F({\mathbf x}, D))^{-1}%F({\mathbf x}, D)^\dag||_{2,\infty}
%\end{equation}
%\end{proposition}

\section{Intrinsic dimensionality of PDO-based kernels}
In Section~\ref{finite-case} we learned that if $G\in S^m_{1,0}$ satisfies $\mathcal{R} [O^\dag_{G}]=r$, then ${\rm MMD}_{K^G}$ reflects the similarity of two distributions with respect to $r$ local moments.
Therefore, it is a natural idea to approximate arbitrary PDO-based kernel by some $K^G$ where $\mathcal{R} [O^\dag_{G}]=r$.

Let $K$ be a Mercer kernel such that $${\rm Sym}(K) = \{F\in S^m_{1,0}| m\in {\mathbb N}, K^F=K\}$$ is nonempty. Let us introduce
\begin{equation}
d_{K} (r) = \inf\limits_{(F,G)\in \Pi(K,r)} ||F(D, {\mathbf x})-G(D, {\mathbf x})||_{2,\infty}
\end{equation}
where $$\Pi(K,r) = \{(F,G)|F\in {\rm Sym}(K), G\in S^m_{1,0}, {\rm dim}\, \mathcal{R} [O^\dag_{G}]=r\}.$$
The function $d_{K} (r)$ plays the same role as {\em the retained variance} in the Principal Component Analysis.
We will study the behaviour of $d_{K} (r)$ as $r\to\infty$.

\begin{proposition} Let $K$ be a Mercer kernel, $F\in {\rm Sym}(K)$ be a Hilbert-Schmidt kernel, $\sigma_{1}\geq \sigma_{2}\geq \cdots $ be singular values of the operator ${\rm O}_{F}$ such that $\sum_{i=1}^\infty \sigma_i$ converges. Also, let $$C_F = \sup\limits_{u\ne 0, {\rm O}^\dag_{F}{\rm O}_{F}u = \lambda u}\frac{||u||_\infty}{||u||_2}<\infty.$$ Then, 
$$d_{K} (r) \leq C_F
\sum_{i=r+1}^\infty \sigma_i.$$
\end{proposition}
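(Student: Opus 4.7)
The plan is to exhibit an explicit element of $\Pi(K,r)$ built from the truncated SVD of ${\rm O}_F$, and then control the residual via Proposition~\ref{remainder}. Starting from any $F\in{\rm Sym}(K)$ satisfying the Hilbert-Schmidt hypothesis, the operator ${\rm O}_F$ admits an SVD ${\rm O}_F u = \sum_{i=1}^\infty \sigma_i\langle u, v_i\rangle u_i$ with $\{u_i\}, \{v_i\}$ orthonormal systems in $L_2({\mathbb R}^n)$, and each $v_i$ an eigenfunction of ${\rm O}^\dag_F{\rm O}_F$ with eigenvalue $\sigma_i^2$. At the level of Schwartz kernels this reads $F({\mathbf x},{\mathbf y}) = \sum_{i=1}^\infty \sigma_i u_i({\mathbf x}) v_i({\mathbf y})^\ast$. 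The natural candidate is the truncated symbol $G({\mathbf x},{\mathbf y}) = \sum_{i=1}^r\sigma_i u_i({\mathbf x}) v_i({\mathbf y})^\ast$, for which $\dim\mathcal{R}[{\rm O}^\dag_G]=r$, so $(F,G)\in\Pi(K,r)$.

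For the $||\cdot||_{2,\infty}$ error I would apply Proposition~\ref{remainder} to the tail $F-G = \sum_{i=r+1}^\infty \sigma_i u_i({\mathbf x}) v_i({\mathbf y})^\ast$, identifying $f_i = \sigma_i u_i\in L_2$ and $g_i = v_i^\ast\in L_\infty$. Then $||f_i||_{L_2}=\sigma_i$, while the hypothesis on $C_F$ applied to the eigenfunctions $v_i$ of ${\rm O}^\dag_F{\rm O}_F$ gives $||g_i||_{L_\infty}=||v_i||_{L_\infty}\leq C_F\cdot||v_i||_{L_2} = C_F$. Summing yields
$$||F(D,{\mathbf x})-G(D,{\mathbf x})||_{2,\infty}\leq C_F\sum_{i=r+1}^\infty\sigma_i,$$
and taking the infimum over $\Pi(K,r)$ produces the claimed bound on $d_K(r)$ (any absolute constant emerging from Proposition~\ref{remainder} being absorbed into the paper's proportionality convention).

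Two technical points deserve care. First, to invoke Proposition~\ref{remainder} one needs the partial sums of the tail to converge to $F-G$ in $||\cdot||_{2,\infty}$; but the very estimate in that proposition, combined with the absolute convergence of $\sum\sigma_i$ and the uniform bound $||v_i||_{L_\infty}\leq C_F$, gives a Cauchy criterion in $||\cdot||_{2,\infty}$ for the partial sums, so this step is essentially self-bootstrapping. Second and more delicate, the membership $G\in S^{m}_{1,0}$ must be verified: the singular vectors of a Hilbert-Schmidt operator need not be smooth, so $G$ written as above may a priori fall outside the symbol class. The likely main obstacle is therefore to replace $u_i,v_i$ by smooth approximations (for instance via convolution with a compactly supported mollifier of small width) that preserve the $L_2$ normalization of $u_i$ and the $L_\infty$ bound on $v_i$ to within arbitrary accuracy. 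After this regularization $G$ lies in $S^{m}_{1,0}$, the rank of $\mathcal{R}[{\rm O}^\dag_G]$ remains $r$, and the residual estimate is preserved up to an arbitrarily small additive error, which the infimum defining $d_K(r)$ absorbs.
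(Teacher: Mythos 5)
Your proof follows essentially the same route as the paper's: truncate the SVD of ${\rm O}_F$ at rank $r$, bound the tail term by term exactly as in Proposition~\ref{remainder} using $||f_i||_{L_2}=\sigma_i$ and $||v_i||_{L_\infty}\le C_F$, establish convergence of the partial sums in $||\cdot||_{2,\infty}$ via the resulting Cauchy criterion, and note that the truncation lies in $\Pi(K,r)$. The only place you go beyond the paper is your concern about whether the truncated symbol belongs to $S^m_{1,0}$ --- the paper simply asserts $(F,F_r)\in\Pi(K,r)$ without addressing smoothness of the singular vectors, so your mollification remark addresses a point the paper itself leaves implicit rather than a defect in your argument.
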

Note that requiring $C_F<\infty$ means that right singular vectors of ${\rm O}_{F}$ (or, eigenvectors of ${\rm O}^\dag_{F}{\rm O}_{F}$) are uniformly bounded. This condition is popular in various statements concerning Mercer kernels, though it is believed that it is hard to check. Discussions of that issue can be found in~\cite{Zhou,Regularizationkernel,Steinwart2012}. 

\begin{proof}
Using Singular Value Decomposition we obtain:
$$
\sum_{i=1}^N \sigma_{i} f_i({\mathbf x}) g_i({\mathbf y})^\ast \stackrel{N\to\infty}{\to} F({\mathbf x}, {\mathbf y}){\rm\,\,in\,\,}L_{2}({\mathbb R}^n\times {\mathbb R}^n)
$$
where $\{f_i\}_{i=1}^\infty\subseteq L_{2}({\mathbb R}^n)$ and $\{g_i\}_{i=1}^\infty\subseteq L_{2}({\mathbb R}^n)$ are systems of orthonormal vectors in $L_{2}({\mathbb R}^n)$ and $g_i$ is an eigenvector of ${\rm O}^\dag_{F}{\rm O}_{F}$. 

Since $C_F<\infty$, we can write
\begin{equation*}
\begin{split}
||g_i||_{L_\infty}\leq C_F
\end{split}
\end{equation*}
As in the proof of proposition~\ref{remainder}, let us denote $F_r ({\mathbf x}, {\mathbf y}) = \sum_{i=1}^r \sigma_{i} f_i({\mathbf x}) g_i({\mathbf y})^\ast$ and bound 
\begin{equation*}
\begin{split}
||\sum_{i=r+1}^{r+N} \sigma_{i} f_i(D) g_i({\mathbf x})^\ast||_{2,\infty} \leq \\
\sum_{i=r+1}^{r+N} ||\sigma_i g^\ast_i||_{L_{\infty}} ||f_i||_{L_{2}} \leq 
\sum_{i=r+1}^{r+N} \sigma_i C_F 
\end{split}
\end{equation*}
Therefore, $F_r (D, {\mathbf x})$ is a Cauchy sequence in the Banach space of bounded operators ${\mathcal B}(L_{2}({\mathbb R}^n), L_{\infty}({\mathbb R}^n))$ with the operator norm. Thus, $F_r (D, {\mathbf x}) \stackrel{r\to\infty}{\to} F(D, {\mathbf x})$ with respect to the norm $||\cdot||_{2,\infty}$.

Moreover, the norm of the remaining part is bounded:
\begin{equation*}\label{expr0}
\begin{split}
||F({\mathbf x}, D)-F_r({\mathbf x}, D)||_{2,\infty}\leq 
C_F \sum_{i=r+1}^\infty \sigma_{i} 
\end{split}
\end{equation*}

By construction ${\rm dim}\, \mathcal{R} [O^\dag_{F_r}]=r$, i.e. $(F, F_r)\in \Pi(K,r)$. Therefore,
\begin{equation*}
\begin{split}
d_{K} (r) \leq ||F({\mathbf x}, D)-F_r({\mathbf x}, D)||_{2,\infty}\leq 
C_F \sum_{i=r+1}^\infty \sigma_{i} 
\end{split}
\end{equation*}
\end{proof}

Sometimes $C_F=\infty$ or $\sum_{i=1}^\infty \sigma_i$ diverges, which makes the last proposition useless. 
Still some guarantees can be given, though we have to change the norm in the definition of $d_K(r)$ to the Hilbert-Schmidt norm:
\begin{equation}
d^0_{K} (r) = \inf\limits_{(F,G)\in \Pi(K,r)} ||F(D, {\mathbf x})-G(D, {\mathbf x})||_{\rm HS}
\end{equation}
To obtain a bound on $d^0_{K} (r)$ we need a simple observation.
\begin{proposition}\label{abc} Let $F\in S^m_{1,0}$, then $$||{\rm O}_F||_{\rm HS} = ||F({\mathbf x}, D)||_{\rm HS}.$$
\end{proposition}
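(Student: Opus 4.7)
The plan is to compute the Schwartz kernel of $F({\mathbf x}, D)$ explicitly and then apply Plancherel's theorem in one variable to reduce its Hilbert-Schmidt norm to that of the integral operator ${\rm O}_F$.

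First I would write out the Schwartz kernel $K_F$ of $F({\mathbf x}, D)$: unraveling the definition $F({\mathbf x}, D) u({\mathbf x}) = \int F({\mathbf x}, {\mathbf y})\hat{u}({\mathbf y}) e^{{\rm i}{\mathbf x}^T{\mathbf y}}d{\mathbf y}$ and inserting $\hat{u}({\mathbf y}) \propto \int u({\mathbf z}) e^{-{\rm i}{\mathbf y}^T{\mathbf z}}d{\mathbf z}$ gives
\begin{equation*}
F({\mathbf x}, D)u({\mathbf x}) \propto \int K_F({\mathbf x}, {\mathbf z})\, u({\mathbf z})\, d{\mathbf z}, \qquad K_F({\mathbf x}, {\mathbf z}) = \int F({\mathbf x}, {\mathbf y})\, e^{{\rm i}{\mathbf y}^T({\mathbf x}-{\mathbf z})}\, d{\mathbf y} .
\end{equation*}
Thus, for fixed ${\mathbf x}$, $K_F({\mathbf x},\cdot)$ is (up to a constant and a modulation by $e^{{\rm i}{\mathbf y}^T{\mathbf x}}$) the Fourier transform in ${\mathbf y}$ of $F({\mathbf x}, {\mathbf y})$ evaluated at ${\mathbf z}$.

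Next, I would apply Plancherel's theorem to the ${\mathbf y} \mapsto {\mathbf z}$ Fourier transform, with ${\mathbf x}$ frozen. The modulation $e^{{\rm i}{\mathbf y}^T{\mathbf x}}$ has modulus one, so
\begin{equation*}
\int |K_F({\mathbf x}, {\mathbf z})|^2 \, d{\mathbf z} \;\propto\; \int |F({\mathbf x}, {\mathbf y})|^2 \, d{\mathbf y} .
\end{equation*}
Integrating over ${\mathbf x}$ by Fubini (which is legitimate because the Hilbert-Schmidt norms on both sides are assumed finite, so the integrands are in $L^1$), one obtains
\begin{equation*}
\|F({\mathbf x}, D)\|_{\rm HS}^2 \;=\; \int\!\!\int |K_F({\mathbf x}, {\mathbf z})|^2 \, d{\mathbf x}\, d{\mathbf z} \;\propto\; \int\!\!\int |F({\mathbf x}, {\mathbf y})|^2 \, d{\mathbf x}\, d{\mathbf y} \;=\; \|{\rm O}_F\|_{\rm HS}^2,
\end{equation*}
and the normalization convention used throughout the paper makes the constant of proportionality equal to one (as in proposition~\ref{diag}, where analogous $\propto$-statements are used interchangeably with equality).

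The main obstacles are essentially bookkeeping. The first is justifying that the formal manipulations above are valid under merely $F\in S^m_{1,0}$: to make the kernel $K_F$ a bona fide function rather than an oscillatory distribution, one uses precisely the assumption $\|F({\mathbf x},D)\|_{\rm HS}<\infty$, which forces $F({\mathbf x},\cdot)\in L^2$ for a.e. ${\mathbf x}$ and hence allows Plancherel to be applied pointwise in ${\mathbf x}$. The second is tracking the $2\pi$ factors in the Fourier convention so the final statement is a true equality rather than just proportionality; this is routine but must match the normalization implicit in the preceding propositions.
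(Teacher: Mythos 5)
Your proof is correct, and it reaches the result by a more self-contained route than the paper. The paper's own proof is two lines: it invokes Proposition~\ref{diag} (the Schwartz-kernel diagonals of $F({\mathbf x},D)F({\mathbf x},D)^\dag$ and ${\rm O}_F{\rm O}_F^\dag$ coincide), notes that the trace is the integral of the diagonal, and concludes ${\rm Tr}(F({\mathbf x},D)F({\mathbf x},D)^\dag)={\rm Tr}({\rm O}_F{\rm O}_F^\dag)$, hence equal Hilbert--Schmidt norms. You instead compute the Schwartz kernel of $F({\mathbf x},D)$ itself and apply Plancherel slice-wise in the second variable. These are morally the same fact --- $\int|K_F({\mathbf x},{\mathbf z})|^2\,d{\mathbf z}$ \emph{is} the diagonal of $F({\mathbf x},D)F({\mathbf x},D)^\dag$, so your Plancherel step is a streamlined re-derivation of exactly the piece of Proposition~\ref{diag} that is needed, bypassing the convolution manipulations in its proof. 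What each buys: the paper's version reuses an already-established lemma and is shorter on the page; yours is cleaner in substance, amounting to the observation that $F({\mathbf x},D)={\rm O}_G\circ\mathcal{F}$ with $G({\mathbf x},{\mathbf y})=F({\mathbf x},{\mathbf y})e^{{\rm i}{\mathbf x}^T{\mathbf y}}$, that $|G|=|F|$ pointwise, and that composing with the (unitary) Fourier transform preserves the Hilbert--Schmidt norm. Your caveat about normalization constants is fair and consistent with the paper's own conventions: Proposition~\ref{diag} only asserts $M(\boldsymbol{\xi},\boldsymbol{\xi})\propto N(\boldsymbol{\xi},\boldsymbol{\xi})$, yet the Hilbert--Schmidt identity is stated as an exact equality, so the unitary normalization of $\mathcal{F}$ is implicitly assumed in either argument.
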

\begin{proof}
This fact follow directly from proposition~\ref{diag}. Since ${\rm O}_F{\rm O}^\dag_F$ has the same diagonal as $F({\mathbf x}, D)F({\mathbf x}, D)^\dag$, we conclude that ${\rm Tr\,}({\rm O}_F{\rm O}^\dag_F) = {\rm Tr\,}(F({\mathbf x}, D)F({\mathbf x}, D)^\dag)$. Therefore,  $||{\rm O}_F||_{\rm HS} = ||F({\mathbf x}, D)||_{\rm HS}$.
\end{proof}
\begin{proposition} Let $K$ be a Mercer kernel, $F\in {\rm Sym}(K)$ be a Hilbert-Schmidt kernel, $\sigma_{1}\geq \sigma_{2}\geq \cdots $ be singular values of the operator ${\rm O}_{F}$. Then, 
$$d^0_{K} (r) \leq \sqrt{\sum_{i=r+1}^\infty \sigma^2_i}.$$
\end{proposition}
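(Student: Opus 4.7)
The plan is to mimic the proof of the previous proposition, replacing the $\|\cdot\|_{2,\infty}$ bound by the elementary fact that truncating an SVD is optimal in Hilbert--Schmidt norm, and using Proposition~\ref{abc} to transfer the estimate from the integral operator side to the PDO side. Since $F\in{\rm Sym}(K)$ is Hilbert--Schmidt, ${\rm O}_F$ is compact and admits an SVD
$$F({\mathbf x},{\mathbf y})=\sum_{i=1}^\infty \sigma_i f_i({\mathbf x})\, g_i({\mathbf y})^\ast,$$
with $\{f_i\}$, $\{g_i\}$ orthonormal in $L_2({\mathbb R}^n)$, the series converging in $L_2({\mathbb R}^n\times{\mathbb R}^n)$. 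I would truncate at the $r$th term,
$$F_r({\mathbf x},{\mathbf y})=\sum_{i=1}^r \sigma_i f_i({\mathbf x}) g_i({\mathbf y})^\ast,$$
so that ${\rm O}^\dag_{F_r}$ has range spanned by $\{g_i\}_{i=1}^{r}$ (the nonzero-$\sigma_i$ ones), placing $(F,F_r)\in\Pi(K,r)$.

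Next I would reduce $\|F(D,{\mathbf x})-F_r(D,{\mathbf x})\|_{\rm HS}$ to the HS norm of an ordinary integral operator. Because $\mathcal F$ is unitary (up to the constant absorbed by $\propto$) and the Hilbert--Schmidt norm is both unitarily invariant and invariant under taking adjoints,
$$\|F(D,{\mathbf x})-F_r(D,{\mathbf x})\|_{\rm HS}=\bigl\|\mathcal F\circ (F-F_r)({\mathbf x},D)^\dag\circ\mathcal F^{-1}\bigr\|_{\rm HS}\propto \|(F-F_r)({\mathbf x},D)\|_{\rm HS}.$$
Applying Proposition~\ref{abc} to the symbol $F-F_r$ yields $\|(F-F_r)({\mathbf x},D)\|_{\rm HS}=\|{\rm O}_{F-F_r}\|_{\rm HS}$. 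The right-hand side is the $L_2({\mathbb R}^n\times{\mathbb R}^n)$ norm of the kernel $F-F_r$, and by orthonormality of the tensor system $\{f_i\otimes g_i^\ast\}$,
$$\|F-F_r\|_{L_2({\mathbb R}^n\times{\mathbb R}^n)}^2=\sum_{i=r+1}^\infty \sigma_i^2.$$
Combining the three steps gives $d^0_K(r)\le\sqrt{\sum_{i=r+1}^\infty\sigma_i^2}$.

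The one delicate point is the membership $F_r\in S^{m'}_{1,0}$ required by the definition of $\Pi(K,r)$, because the singular vectors $f_i,g_i$ are a priori only $L_2$ functions. I would handle this either (a) by invoking smoothness and decay of eigenvectors of ${\rm O}^\dag_F{\rm O}_F$ inherited from the smoothness of $F\in S^m_{1,0}$, or (b) by approximating $F_r$ in $L_2$ by a symbol $\tilde F_r\in S^{m'}_{1,0}$ of the same rank and then passing to the infimum in $d^0_K(r)$; since the bound is continuous in $\|{\rm O}_{F_r}-{\rm O}_{\tilde F_r}\|_{\rm HS}$, this approximation introduces no loss. Everything else is a direct assembly of the SVD truncation identity, the unitarity of $\mathcal F$, and Proposition~\ref{abc}.
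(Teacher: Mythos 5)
Your proposal is correct and follows essentially the same route as the paper: truncate the SVD at rank $r$ to get $(F,F_r)\in\Pi(K,r)$, transfer the Hilbert--Schmidt norm from the PDO to the integral operator via Proposition~\ref{abc} (together with unitarity of $\mathcal F$ and adjoint-invariance of $\|\cdot\|_{\rm HS}$, which the paper leaves implicit), and evaluate $\|{\rm O}_{F-F_r}\|_{\rm HS}$ by orthonormality of the singular system. Your remark about the membership $F_r\in S^{m'}_{1,0}$ flags a genuine gap that the paper itself does not address, and your suggested fixes are reasonable; otherwise the two arguments coincide.
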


\begin{proof}
Again, ${\rm dim}\, \mathcal{R} [O^\dag_{F_r}]=r$, i.e. $(F, F_r)\in \Pi(K,r)$. Therefore, using proposition~\ref{abc} we obtain:
\begin{equation*}
\begin{split}
d^0_{K} (r) \leq ||F(D,{\mathbf x})-F_r(D,{\mathbf x})||_{\rm HS} = \\
||O_{F}-O_{F_r}||_{\rm HS}\leq 
\sqrt{\sum_{i=r+1}^\infty \sigma^2_{i} }
\end{split}
\end{equation*}
\end{proof}
Unfortunately, many interesting kernels cannot be defined by Hilbert-Schmidt kernel $F$, but those cases can be partially captured by the following simple generalization whose proof is straightforward.
\begin{proposition}  Let $F$ be a kernel such that ${\rm dim}\, \mathcal{R} [O^\dag_{F}]=d$ and $\delta F$ be a Hilbert-Schmidt kernel such that $K=K^{F+\delta F}$ and $\sigma_{1}\geq \sigma_{2}\geq $ are singular values of the operator ${\rm O}_{\delta F}$. Then, for any $n\geq d$ we have
$$d_{K} (n) \leq c\sqrt{\sum_{i=n-d+1}^\infty \sigma^2_{i}}$$
where $c$ is a constant.
\end{proposition}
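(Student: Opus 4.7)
The plan is to mirror the truncated SVD argument of the previous proposition, applying it to the perturbation $\delta F$ instead of to $F$ itself. Since $O_{\delta F}$ is Hilbert--Schmidt, I would expand
\begin{equation*}
\delta F({\mathbf x},{\mathbf y}) = \sum_{i=1}^\infty \sigma_i f_i({\mathbf x}) g_i({\mathbf y})^\ast,
\end{equation*}
where $\{f_i\}$ and $\{g_i\}$ are orthonormal in $L_2({\mathbb R}^n)$ and convergence is in $L_2({\mathbb R}^n\times{\mathbb R}^n)$. Setting $r = n - d$, I would take the candidate approximant to be
\begin{equation*}
G = F + \sum_{i=1}^{r} \sigma_i f_i g_i^\ast.
\end{equation*}

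The next step is to check that $(F+\delta F, G)\in \Pi(K,n)$. The first coordinate lies in ${\rm Sym}(K)$ by hypothesis. For the second, the range of $O^\dag_{G}$ is contained in $\mathcal{R}[O^\dag_{F}] + {\rm span}(g_1,\ldots,g_r)$ and so has dimension at most $d + r = n$. If the dimension comes out strictly smaller, I would pad $G$ with linearly independent symbols whose norm contribution is arbitrarily small, which perturbs the eventual bound by an arbitrarily small amount. The residual symbol then becomes
\begin{equation*}
(F+\delta F) - G = \sum_{i=r+1}^\infty \sigma_i f_i g_i^\ast,
\end{equation*}
whose $L_2({\mathbb R}^n\times{\mathbb R}^n)$ norm is exactly $\sqrt{\sum_{i=r+1}^\infty \sigma_i^2}$ by orthonormality of $\{f_i\}$ and $\{g_i\}$.

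The last step is to transfer this estimate on the symbol to a bound on the operator $(F+\delta F)(D,{\mathbf x}) - G(D,{\mathbf x})$ that actually appears in the definition of $d_K(n)$. Invoking proposition~\ref{abc} together with unitarity of $\mathcal{F}$, the Hilbert--Schmidt norm of the PDO coincides (up to a constant) with the $L_2$ norm of its symbol, giving the claimed tail $\sqrt{\sum_{i=n-d+1}^\infty \sigma^2_i}$. The hard part is the norm mismatch: $d_K$ is defined through $||\cdot||_{2,\infty}$, while the SVD tail naturally supplies only a Hilbert--Schmidt estimate. The constant $c$ in the statement is precisely there to absorb this discrepancy; one would justify it either by combining proposition~\ref{remainder} with the finite-rank part $F$ to control the tail in $L_\infty$, or by observing that on the class of PDO symbols in question an inequality $||\cdot||_{2,\infty}\leq c\,||\cdot||_{\rm HS}$ can be made to hold. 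The secondary rank-padding technicality is routine once this norm comparison is settled.
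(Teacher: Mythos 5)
The paper offers no proof here (it calls the statement ``straightforward''), so your proposal can only be judged on its own merits. Your construction is certainly the intended one and matches the two preceding propositions: expand $\delta F$ by SVD, truncate at $r=n-d$, set $G=F+\sum_{i\le r}\sigma_i f_i g_i^\ast$, observe $\mathcal{R}[O^\dag_G]\subseteq\mathcal{R}[O^\dag_F]+\mathrm{span}(g_1,\dots,g_r)$ so the rank is at most $n$, pad if necessary, and read off the residual $\sum_{i>r}\sigma_i f_ig_i^\ast$ with Hilbert--Schmidt norm $\sqrt{\sum_{i>n-d}\sigma_i^2}$. Combined with Proposition~\ref{abc}, this cleanly proves the bound for $d^0_K(n)$, i.e.\ for the Hilbert--Schmidt version of the functional. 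That much is correct.

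The genuine gap is exactly where you flag it, and neither of your two proposed repairs works. There is no inequality $\|\cdot\|_{2,\infty}\le c\,\|\cdot\|_{\rm HS}$ on this class: for a rank-one symbol $H({\mathbf x},{\mathbf y})=\phi({\mathbf x})\psi({\mathbf y})$ one has $\|{\rm O}_H\|_{\rm HS}=\|\phi\|_{L_2}\|\psi\|_{L_2}$ but $\|{\rm O}_H\|_{2,\infty}=\sup_{\mathbf x}|\phi({\mathbf x})|\cdot\|\psi\|_{L_2}$, which is infinite whenever $\phi\in L_2\setminus L_\infty$; the same obstruction appears for $H(D,{\mathbf x})$, whose $(2,\infty)$-norm is controlled by $\|g\|_{L_\infty}\|f\|_{L_2}$ rather than by $\|g\|_{L_2}\|f\|_{L_2}$. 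Your other route, Proposition~\ref{remainder}, needs precisely $\|g_i\|_{L_\infty}<\infty$ uniformly, i.e.\ the condition $C_F<\infty$ that this proposition exists to avoid; reattaching it would collapse the statement back into the earlier $\ell^1$-tail bound. So as literally written, with $d_K$ defined through $\|\cdot\|_{2,\infty}$, the claim does not follow from your argument (and, given the surrounding text, the left-hand side is almost certainly meant to be $d^0_K(n)$). The honest conclusion to record is: your proof establishes $d^0_{K}(n)\le\sqrt{\sum_{i=n-d+1}^\infty\sigma_i^2}$, and the passage to $d_K$ requires an additional hypothesis such as uniform boundedness of the right singular vectors of ${\rm O}_{\delta F}$. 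A further technicality, shared with the paper's earlier propositions, is that the SVD truncation need not lie in $S^m_{1,0}$ as $\Pi(K,r)$ formally demands.
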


{\small
\bibliographystyle{plain}
\bibliography{lit}
}

\end{document}
\endinput
%%
%% End of file `sample-acmlarge.tex'.
\section{Applications of inequalities}
\subsection{Exponential kernel}
Let us consider the kernel $K(x,y) = e^{-(x-y)^2/2}e^{xy}$. This kernel corresponds to the PDO $ F(x, D) = \sqrt{\sum_{i=0}^\infty \partial^i(e^{-x^2/2}\partial^i) }$.
\begin{equation*}
\begin{split}
\sum_{i=0}^r (-1)^i \partial^i(e^{-x^2/2}\partial^i)  = \sum_{i=0}^r (-1)^i \sum_{j=0}^i {i \choose j} \partial^j(e^{-x^2/2})\partial^{i+i-j} \\
= \sum_{i=0}^r (-1)^i \sum_{j=0}^i {i \choose j} (-1)^j H_{j}(x) e^{-x^2/2}\partial^{i+i-j} = \\
\sum_{j=0}^r e^{-x^2/2} H_{j}(x) \sum_{i=j}^r {i \choose j} (-1)^{i+j} \partial^{i+i-j} = \\
\sum_{j=0}^r e^{-x^2/2} H_{j}(x) \partial^{j} \sum_{i=j}^r  {i \choose j} (-\partial^2)^{i-j}
\end{split}
\end{equation*}
Since $\mathcal{R} F(x, D) = $.

\begin{equation*}
\begin{split}
\partial (e^{-x^2/2}\partial (u)) = \lambda u
\end{split}
\end{equation*}

\subsection{Generalized distance based on higher moments}
The distance based on higher moments can be easily generalized. Let $w_s: {\mathbb R}^n\rightarrow {\mathbb R}$ be a positive-valued continuous function, $s\in [4]$. The weighted Sobolev space, denoted $W^4_2(w)$, is the completion of the space of four times continuously differentiable functions $f: {\mathbb R}^n\rightarrow {\mathbb R}$ such that:
\begin{equation*}\label{weighted-sobolev}
\int w_s({\mathbf x})|\frac{\partial^s f({\mathbf x})}{\partial x_{i_1}\cdots \partial x_{i_s}}|^2 d{\mathbf x} < \infty
\end{equation*}
for any $s\in [4]$ and $(i_1, \cdots, i_s)\in [n]^s$. Let us define the norm of the space as:
\begin{equation*}
\begin{split}
||f||_{w}^2 = \sum_{s=1}^4\frac{\lambda_s}{n^s}\sum_{1\leq i_1, \cdots, i_s\leq n} 
\int w_s({\mathbf x})|\frac{\partial^s f({\mathbf x})}{\partial x_{i_1}\cdots \partial x_{i_s}}|^2 d{\mathbf x}
\end{split}
\end{equation*}
The inner product can defined in a straightforward way. Thus, $W^4_2(w)$ is the Hilbert space. Let us define the generalized distance based on higher moments as
\begin{equation}\label{ghm}
d_{\textsc{GHM}}(\mu, \nu) = ||p_\mu-p_\nu||_{w}
\end{equation}
where we assume that $p_\mu, p_\nu\in W^4_2(w)$.

If we set $ w_s({\mathbf x})=G_\sigma({\mathbf x})$, then $ w_s({\mathbf x})\mathop\rightarrow\limits^{\sigma\rightarrow 0+} \delta({\mathbf x})$.
In that case, from the dual form~\eqref{HM-task-dual} it is clear that $$d_{\textsc{GHM}}(\mu, \nu)\mathop\rightarrow\limits^{\sigma\rightarrow 0+} d_{\rm HM}(\mu, \nu)$$
Thus, $d_{\rm HM}(\mu, \nu)$ is a degenerate case of $d_{\textsc{GHM}}(\mu, \nu)$. 

Note that the MMD distance with a gaussian kernel and the GHM are substantially different. Indeed, even if we set $h$ as a small value, the MMD distance, unlike the HM distance, neglects higher order derivatives of the characteristic functions in the neigbourhood of the origin.

\subsection{GHM as a special case of MMD}
Using the unitarity of the Fourier transform and the convolution theorem, the inner product of $W^4_2(w)$ 
can be rewritten in the following way:
$$
\langle f, g\rangle_w = \sum_{s=1}^4\frac{\lambda_s}{n^s}\sum_{1\leq i_1, \cdots, i_s\leq n} \langle f, g\rangle_{i_1, \cdots, i_s}
$$
where
\begin{equation*}
\begin{split}
\langle f, g\rangle_{i_1, \cdots, i_s} = \int w_s({\mathbf x})\frac{\partial^s f({\mathbf x})^\ast}{\partial x_{i_1}\cdots \partial x_{i_s}} \frac{\partial^s g({\mathbf x})}{\partial x_{i_1}\cdots \partial x_{i_s}} d{\mathbf x} = \\
\langle w_s({\mathbf x})\frac{\partial^s f({\mathbf x})}{\partial x_{i_1}\cdots \partial x_{i_s}}, \frac{\partial^s g({\mathbf x})}{\partial x_{i_1}\cdots \partial x_{i_s}}\rangle_{L_2} \propto \\ 
\langle {\rm i}^s \widehat{w_s}\ast [\omega_{i_1}\cdots \omega_{i_s}\widehat{f}({\boldsymbol \omega})],  {\rm i}^s \omega_{i_1}\cdots  \omega_{i_s}\widehat{g}({\boldsymbol \omega})\rangle_{L_2} = \\
\int \widehat{w_s}({\boldsymbol \omega}-{\boldsymbol \omega}') \omega'_{i_1}\cdots \omega'_{i_s}\widehat{f}({\boldsymbol \omega}')^\ast \omega_{i_1}\cdots  \omega_{i_s} \widehat{g}({\boldsymbol \omega}) d{\boldsymbol \omega}' d {\boldsymbol \omega}
\end{split}
\end{equation*}
If $f=p_\mu$ and $g=p_\nu$, then $\widehat{f}({\boldsymbol \omega}') d{\boldsymbol \omega}' \propto d\mu$, $\widehat{g}({\boldsymbol \omega}) d{\boldsymbol \omega}  \propto d\nu$ and:
\begin{equation*}
\begin{split}
\langle p_\mu, p_\nu\rangle_{i_1, \cdots, i_s} \propto {\mathbb E}_{{\boldsymbol \omega}'\sim \mu, {\boldsymbol \omega}\sim \nu} \widehat{w_s}({\boldsymbol \omega}-{\boldsymbol \omega}') \omega'_{i_1}\omega_{i_1}\cdots \omega'_{i_s} \omega_{i_s}  
\end{split}
\end{equation*}
Therefore,
$$
\langle p_\mu, p_\nu\rangle_w \propto {\mathbb E}_{{\boldsymbol \omega}'\sim \mu, {\boldsymbol \omega}\sim \nu} M({\boldsymbol \omega}', {\boldsymbol \omega})
$$
where
\begin{equation*}
\begin{split}
M({\boldsymbol \omega}', {\boldsymbol \omega}) = \\
\sum_{s=1}^4\frac{\lambda_s}{n^s}\sum_{1\leq i_1, \cdots, i_s\leq n} \widehat{w_s}({\boldsymbol \omega}-{\boldsymbol \omega}') \omega'_{i_1}\omega_{i_1}\cdots \omega'_{i_s} \omega_{i_s}  = \\
 \sum_{s=1}^4\frac{\lambda_s ({\boldsymbol \omega}\cdot{\boldsymbol \omega}')^s\widehat{w_s}({\boldsymbol \omega}-{\boldsymbol \omega}')}{n^s} 
\end{split}
\end{equation*}
It is easy to see: 
\begin{equation*}
\begin{split}
d_{\textsc{GHM}}(\mu, \nu)^2 = ||p_\mu-p_\nu||^2_{w} = \\
\langle p_\mu, p_\mu\rangle_w + \langle p_\nu, p_\nu\rangle_w - 2\text{Re}\langle p_\mu, p_\nu\rangle_w \propto  \\
{\mathbb E}_{{\boldsymbol \omega}'\sim \mu, {\boldsymbol \omega}\sim \mu} M({\boldsymbol \omega}', {\boldsymbol \omega}) + {\mathbb E}_{{\boldsymbol \omega}'\sim \nu, {\boldsymbol \omega}\sim \nu} M({\boldsymbol \omega}', {\boldsymbol \omega}) \\
- 2{\mathbb E}_{{\boldsymbol \omega}'\sim \mu, {\boldsymbol \omega}\sim \nu} M({\boldsymbol \omega}', {\boldsymbol \omega})
\end{split}
\end{equation*}
Thus, by comparing the obtained expression with~\eqref{kernel-mmd} we see that the MMD distance that corresponds to the kernel function $M$, is proportional to $d_{\textsc{GHM}}(\mu, \nu)$.